\newif\ifnotes\notestrue
\newtheorem{lem}{Lemma}[section]
\newtheorem{thm}[lem]{Theorem}
\newtheorem{cor}[lem]{Corollary}
\newtheorem{defn}[lem]{Definition}
\newtheorem{proposition}[lem]{Proposition}
\crefname{thm}{Theorem}{Theorems}
\newcommand{\wt}[1]{\smash{\widetilde{#1}}}
\newcommand{\eps}{\varepsilon}
\newcommand{\E}{\mathbb{E}}
\DeclareMathOperator*{\argmin}{arg\,min}
\providecommand{\reals}{\mathbb{R}}
\renewcommand{\Re}{\mathbb{R}}
\newcommand{\Nat}{\mathbb{N}}
\newcommand{\N}{\mathcal{N}}
\newcommand{\Id}{\mathbb{I}}
\newcommand{\A}{\mathcal{A}}
\newcommand{\bE}{\mathbb{E}}
\newcommand{\cA}{\mathcal{A}}
\newcommand{\cK}{{\mathcal{K}}}
\newcommand{\cN}{\mathcal{N}}
\newcommand{\cO}{\mathcal{O}}
\newcommand{\cP}{\mathcal{P}}
\newcommand{\cX}{\mathcal{X}}
\newcommand{\cZ}{\mathcal{Z}}
\newcommand{\ed}{\ensuremath{(\eps,\delta)}}
\newcommand{\ud}{\mathrm d}
\newcommand{\normal}[2]{\mathcal{N}(#1, #2)}
\newcommand{\conv}{\ast}
\newcommand{\Renyi}{R\'enyi\,}
\DeclarePairedDelimiterX{\infdivx}[2]{(}{)}{#1\;\delimsize\|\;#2}
\newcommand{\Dalpha}[2]{\mathrm{D}_{\alpha}\infdivx*{#1}{#2}}
\newcommand{\eff}{\ensuremath{\psi}}
\newcommand{\efs}{\ensuremath{\{\psi_t\}}}
\newcommand{\efps}{\ensuremath{\{\psi'_t\}}}
\newcommand{\K}{\mathcal{K}}
\newcommand{\dist}{\ensuremath{\mathcal{D}}}
\newcommand{\dists}{\ensuremath{\{\dist_t\}}}
\newcommand{\notbeta}{\gamma}
\providecommand{\ignore}[1]{\relax}
\newcommand{\CNI}{\ensuremath{\mbox{CNI}}\xspace}
\providecommand{\equ}[1]{

\begin{equation}
#1
\end{equation}}
\providecommand{\ignore}[1]{\relax}
\begin{document}
\title{Private Stochastic Convex Optimization:\\ Optimal Rates in Linear Time}
\author{%
    Vitaly Feldman\thanks{Apple; work done while at Google Research; \texttt{vitaly.edu@gmail.com}.}
    \and Tomer Koren\thanks{School of Computer Science, Tel Aviv University, and Google; \texttt{tkoren@tauex.tau.ac.il}.}
    \and Kunal Talwar\thanks{Apple; work done while at Google Research; \texttt{kunal@kunaltalwar.org}.}
    }

\maketitle

\begin{abstract}
We study differentially private (DP) algorithms for {\em stochastic convex
optimization}: the problem of minimizing the population loss given
i.i.d.~samples from a distribution over convex loss functions.  A recent work of
Bassily et al.~(2019) has established the optimal bound on the excess population
loss achievable given $n$ samples. Unfortunately, their algorithm achieving this
bound is relatively inefficient: it requires $O(\min\{n^{3/2}, n^{5/2}/d\})$
gradient computations, where $d$ is the dimension of the optimization problem.

We describe two new techniques for deriving DP convex optimization algorithms
both achieving the optimal bound on excess loss and using $O(\min\{n, n^2/d\})$
gradient computations. In particular, the algorithms match the running time of
the optimal non-private algorithms. The first approach relies on the use of
variable batch sizes and is analyzed using the privacy amplification by
iteration technique of Feldman et al.~(2018). The second approach is based on a
general reduction to the problem of localizing an approximately optimal solution
with differential privacy. Such localization, in turn, can be achieved using
existing (non-private) uniformly stable optimization algorithms.  As in the
earlier work, our algorithms require a mild smoothness assumption.  We also give
a linear-time algorithm achieving the optimal bound on the excess loss for the
strongly convex case, as well as a faster algorithm for the non-smooth case.
\end{abstract}

\thispagestyle{empty}
\pagebreak
\setcounter{page}{1}

\section{Introduction}

Stochastic convex optimization (SCO) is the problem of minimizing the expected loss (also referred to as {\em population loss}) $F(w) = \E_{x \sim \cP}[f(w,x)]$ for convex loss functions of $w$ over some $d$-dimensional convex body $\K$ given access to i.i.d.~samples $x_1, \ldots, x_n$ from the data distribution $\cP$. The performance of an algorithm for the problem is measured by bounding the {\em excess (population) loss} of a solution $w$, that is the value $F(w) - \min_{v\in \K} F(v)$. This problem is central to numerous applications in machine learning and arises for example in least squares/logistic regression, or minimizing a convex surrogate loss for a classification problem. It also serves as the basis for the development of continuous optimization algorithms in the non-convex setting. In this work we study this problem with the constraint of differential privacy with respect to the set of samples \cite{DMNS06}.

Placing a differential privacy constraint usually comes at a cost in terms of utility. In this case, it is measured by the excess population loss of the solution, for a given number of samples $n$. Additionally, runtime efficiency of an optimization method is crucial for modern applications on large high-dimensional datasets, and this is the primary reason for the popularity of stochastic gradient descent-based methods. This motivates the problem of understanding the trade-offs between computational efficiency, and excess population loss in the presence of privacy constraints.

Differentially private convex optimization is one of most well-studied problems in private data analysis \cite{CM08,chaudhuri2011differentially,jain2012differentially,kifer2012private,ST13sparse,song2013stochastic,DuchiJW13,ullman2015private,JTOpt13,bassily2014differentially,talwar2015nearly,smith2017interaction,wu2017bolt,wang2017differentially,iyengartowards}.
However, most of the prior work focuses on the easier problem of minimizing the empirical loss $\hat F(w) = \frac{1}{n} \sum_i f(w,x_i)$ (referred to as empirical risk minimization (ERM)) for which tight upper and lower bounds on the excess loss are known in a variety of settings.
Upper bounds for the differentially private ERM can be translated to upper bounds on the population loss by appealing to {\em uniform convergence} of empirical loss to population loss, namely an upper bound on $\sup_{w\in \K} (F(w) - \hat F(w))$.
However, in general,\footnote{At the same time, uniform convergence suffices to derive optimal bounds on the excess population loss in a number of special cases, such as regression for generalized linear models.} this approach leads to suboptimal bounds: it is known that there exist distributions over loss functions over $\Re^d$ for which the best bound on uniform convergence is $\Omega(\sqrt{d/n})$ \cite{feldman2016generalization}. As a result, in the high-dimensional settings often considered in modern ML (when $n = \Theta(d)$), bounds based on uniform convergence are $\Omega(1)$ and do not lead to meaningful bounds on population loss.

The first work to address the population loss for SCO with differential privacy (DP-SCO) is \cite{bassily2014differentially} who give a bound of order $\max\{d^{1/4}\big/\sqrt{n}, \eps^{-1}\sqrt{d}\big/n\}$ \cite[Sec. F]{bassily2014differentially}.\footnote{For clarity, in the introduction we focus on the dependence on $d$, $n$ and $\eps$ for $(\eps,\delta)$-DP, suppressing the dependence on $\delta$ and on parameters of the loss function such as Lipschitz constant and the diameter of $\K$.}
For the most relevant case where $d=\Theta(n)$ and $\eps=\Theta(1)$, this results in a bound of $\Omega(n^{-1/4})$ on excess population loss.
More recent work of Bassily et al.~\cite{BassilyFTT19} demonstrates the existence of an efficient algorithm that achieves a bound of $O(1\big/\sqrt{n} + \eps^{-1}\sqrt{d} \big/ n)$, which is also shown to be tight.
Notably, this bound is comparable to the non-private SCO bound of $O(1/\sqrt{n})$ as long as $d/\eps^2=O(n)$. Their algorithm is based on solving the ERM via noisy stochastic gradient descent (SGD) \cite{bassily2014differentially} but requires relatively large batch sizes for the privacy analysis. As a result, their algorithm uses $O(\min\{n^{3/2}, n^{5/2}\big/d\})$ gradient computations. This is substantially less efficient than the optimal non-private algorithms for the problem which require only $n$ gradient evaluations.
They also give a near-linear-time algorithm under an additional strong assumption that the Hessian of each loss function is rank-1 over the entire domain.

Along the other axis, several of the aforementioned works on private ERM~\cite{wu2017bolt, wang2017differentially, iyengartowards} are geared towards finding computationally efficient algorithms for the problem, often at the cost of worse utility bounds.

We describe two new techniques for deriving linear-time algorithms that achieve the (asymptotically) optimal bounds on the excess population loss. Thus our results show that for the problem of Stochastic Convex Optimization, under mild assumptions, a privacy constraint come for free. For $d \leq n$, there is no overhead in terms of either excess loss or the computational efficiency. When $d \geq n$, the excess loss provably increases, but the optimal bounds can still be achieved without any computational overhead. Unlike the earlier algorithm \cite{BassilyFTT19} that solves the ERM and relies on uniform stability of the algorithm to ensure generalization, our algorithms directly optimize the population loss.

Formally, our algorithms satisfy the following bounds:%
\begin{thm}
\label{thm:main-intro}
Let $\K \subseteq \Re^d$ be a convex set of diameter $D$ and $\{f(\cdot,x)\}_{x\in \cX}$ be a family of convex $L$-Lipschitz and $\beta$-smooth functions over $\K$. For every $\rho >0$, there exists an algorithm $\A$ that given
a starting point $w_0\in \K$, and $S\in \cX^n$ returns a point $\hat w$. For all $\alpha \geq 1$, $\A$ uses $n$ evaluations of the gradient of $f(w,x)$ and satisfies $\left(\alpha, \alpha \rho^2/2 \right)$-RDP as long as $\beta \leq c\frac{L}{D} \min(\sqrt{n},\rho n/\sqrt{d})$, where $c$ is a universal constant. Further, if $S$ consists of samples drawn i.i.d.~from a distribution $\cP$ over $\cX$, then
\[
\E[F(\hat w)] \leq F^* + O\left( D L \cdot \left(\frac{1}{\sqrt{n}} + \frac{\sqrt{d}}{\rho n} \right)\right),
\]
where, for all $w\in \K$, $F(w) \doteq \E_{x \sim \cP}[f(w,x)]$, $F^* \doteq \min_{w\in \K} F(w)$ and the expectation is taken over the random choice of $S$ and randomness of $\A$.
\end{thm}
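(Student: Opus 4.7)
I would take the first approach sketched in the abstract: a single-pass noisy stochastic gradient descent on minibatches of variable size, analyzed through privacy amplification by iteration of Feldman, Mironov, Talwar and Thakurta. Partition $S$ into $K$ phases $B_1, \ldots, B_K$ with batch sizes $b_1, \ldots, b_K$ satisfying $\sum_i b_i = n$, and in phase $i$ perform a single projected gradient step $w_i = \Pi_\K\bigl(w_{i-1} - \eta_i(g_i + \mathbf{N}_i)\bigr)$ using the minibatch average $g_i = \frac{1}{b_i} \sum_{x \in B_i} \nabla f(w_{i-1}, x)$ and fresh Gaussian noise $\mathbf{N}_i \sim \mathcal{N}(\zero, \sigma_i^2 \Id)$. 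Return $\hat w$ as an $\eta_i$-weighted average of the iterates. Since each sample contributes to exactly one gradient evaluation, the budget of $n$ evaluations is respected.

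\textbf{Privacy via amplification by iteration.} Fix neighboring datasets $S,S'$ differing in a single point that lies in phase $i^\ast$. The iterates coincide in distribution before phase $i^\ast$; immediately after that phase the two trajectories differ by an expected shift of at most $s_{i^\ast} = \eta_{i^\ast} L/b_{i^\ast}$. From this point on both processes apply the same sequence of gradient updates followed by independent Gaussian noise. Because $\eta_i \beta \leq 1$ under the smoothness hypothesis, each projected gradient step on a convex $\beta$-smooth loss is $1$-Lipschitz, so the deterministic shift does not grow across the remaining phases. Applying the shifted-R\'enyi-divergence contraction lemma of Feldman et al.\ to the $K - i^\ast$ subsequent noisy iterations yields
\alequn{
\Dpalpha{\alpha}{\A(S)}{\A(S')} \;\leq\; \frac{\alpha\,s_{i^\ast}^2}{2 \sum_{j \geq i^\ast} \eta_j^2 \sigma_j^2}.
}
Calibrating $(b_i, \eta_i, \sigma_i)$ so that the right-hand side is bounded by $\alpha \rho^2 / 2$ uniformly in $i^\ast$ yields the claimed RDP guarantee.

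\textbf{Utility via fresh-sample SGD.} Since each $B_i$ consists of fresh i.i.d.\ samples, $g_i$ is an unbiased estimator of $\nabla F(w_{i-1})$ with variance at most $L^2/b_i$, and the noisy gradient has expected squared norm at most $L^2 + L^2/b_i + d \sigma_i^2$. A standard weighted projected SGD analysis then gives
\alequn{
\E[F(\hat w)] - F^\ast \;\leq\; \frac{D^2 + \sum_i \eta_i^2 \bigl(L^2 + d \sigma_i^2\bigr)}{2 \sum_i \eta_i}.
}
Substituting the schedule that saturates the privacy constraint phase by phase --- a geometric batch schedule with $b_i \asymp 2^i$, $\sigma_i \propto L/(\rho b_i)$, and appropriately tuned $\eta_i$ --- the optimization and sampling-variance terms contribute $O(DL/\sqrt{n})$ while the injected-noise term contributes $O(DL\sqrt{d}/(\rho n))$, matching the claimed population loss bound.

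\textbf{Main difficulty.} The hard part is choosing the three schedules $(b_i, \eta_i, \sigma_i)$ simultaneously so that (i) the R\'enyi bound above is at most $\alpha \rho^2/2$ for the worst phase $i^\ast$, (ii) the step sizes stay in the contractive regime $\eta_i \beta \leq 1$ (which is the provenance of the smoothness condition $\beta \lesssim (L/D)\min(\sqrt{n}, \rho n/\sqrt{d})$ in the statement), and (iii) the resulting utility bound matches the minimax-optimal rate $DL(1/\sqrt{n} + \sqrt{d}/(\rho n))$. A geometric batch schedule is what aligns the amplification-limited early phases with the noise-limited late phases, and making all three constraints tight at the same time --- together with executing the shifted-R\'enyi bookkeeping across variable step sizes --- is the main technical content of the proof.
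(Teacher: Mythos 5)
Your high-level plan---one-pass noisy minibatch SGD with variable batch sizes, privacy analyzed by amplification by iteration---is indeed the paper's first technique (\cref{thm:general-privacy,thm:jnn-privacy-utility}). But there are two concrete gaps. The fatal one is that you return an $\eta_i$-weighted \emph{average} of the iterates while your privacy argument only controls the \Renyi divergence of the \emph{last} iterate. Amplification by iteration bounds $\Dalpha{X_T}{X'_T}$ for the terminal state of the contractive noisy iteration; the average is a function of the entire trajectory, and the first iterate that touches the differing sample is protected only by the single noise injection $\eta_{i^\ast}\sigma_{i^\ast}$, with no amplification from later steps. \cref{sec:no-average-privacy} of the paper gives an explicit contractive noisy process (and an SGD instance) for which the last iterate is $(\alpha,O(\alpha))$-RDP but the average iterate is essentially non-private, so this is not a bookkeeping issue that tighter constants can fix. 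The paper's resolution is to output the last iterate and replace your average-iterate utility bound with a last-iterate convergence guarantee: Shamir--Zhang with fixed step size $\propto 1/\sqrt{T}$ (costing a $\log T$ factor, \cref{thm:sz-privacy-utility}) or the Jain--Nagaraj--Netrapalli decaying schedule $\etajnn{\cdot}$ to remove the log (\cref{thm:sco-last-jnn,thm:jnn-privacy-utility}). Most of the technical work in the paper is verifying that this non-uniform step-size schedule is still compatible with the amplification bound $\max_t \eta_t^2/(B_t^2\sum_{s\geq t}\eta_s^2\sigma_s^2)\leq \rho^2/(4L^2)$.

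The second gap is your batch schedule. With $b_i \asymp 2^i$ and $\sum_i b_i = n$ you run only $K = O(\log n)$ gradient steps, and no SGD analysis (yours included: the term $D^2/(2\sum_i \eta_i)$ with $\eta_i \leq 2/\beta$ forces $\beta = O(L\log n/(D\sqrt n))$) can reach $O(DL/\sqrt n)$ excess loss in $O(\log n)$ steps under the theorem's much weaker smoothness hypothesis $\beta \leq c(L/D)\min(\sqrt n, \rho n/\sqrt d)$. The correct ``Snowball'' schedule is $B_t = \lceil \Theta(\sqrt{d/(T-t+1)}/\rho)\rceil$ over $T=\Theta(n)$ steps: almost all batches have size $O(1+\sqrt{d}/(\rho\sqrt{T}))$ and only the final few steps have large batches, so that $n = T + O(\sqrt{dT}/\rho)$ and the residual noise sum $\sum_{s\geq t}\eta_s^2\sigma_s^2 \propto (T-t+1)$ exactly compensates the weak protection of late-arriving samples. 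Your phase-$i^\ast$ sensitivity should also be $2\eta_{i^\ast}L/b_{i^\ast}$ rather than $\eta_{i^\ast}L/b_{i^\ast}$, though that only affects constants.
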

Our guarantees are stated in terms of \Renyi differential privacy (RDP) \cite{mironov2017renyi} for all orders $\alpha$ and can also be equivalently stated as 0-mean $(\rho^2/2)$-concentrated differential privacy (or $(\rho^2/2)$-zCDP) \cite{BS16-zCDP}. Standard properties of RDP/zCDP imply that our algorithms satisfy $(2\rho \sqrt{\ln(1/\delta)},\delta)$-DP for all $\delta > 0$ as long as $\rho \leq \sqrt{\ln(1/\delta)}$. Thus for $(\eps,\delta)$-DP our bound is
\[
\E[F(\hat w)] \leq F^* + O\left( D L \cdot \left(\frac{1}{\sqrt{n}} + \frac{\sqrt{d\ln(1/\delta)}}{\eps n} \right)\right),
\]
matching the tight bound in \cite{BassilyFTT19}.
We now overview the key ideas and tools used in these techniques.
\paragraph{Snowball-SGD:}
Our first algorithm relies on a one-pass noisy SGD with gradually growing batch sizes. Namely, at step $t$ out of $T$ the batch size is proportional to $1/\sqrt{T-t+1}$. We refer to SGD with such schedule of batch size as {\em Snowball-SGD}. The analysis of this algorithm relies on two tools. The first one is privacy amplification by iteration \cite{FeldmanMTT18}. This privacy amplification technique ensures that for the purposes of analyzing the privacy guarantees of a point $x_i$ used at step $t$ one can effectively treat all the noise added at subsequent steps as also added to the gradient of the loss at $x_i$. A direct application of this technique to noisy SGD results in different privacy guarantees for different points \cite{FeldmanMTT18} and, as a result, the points used in the last $o(n)$ steps will not have sufficient privacy guarantees. However, we show that by increasing the batch size in those steps we can achieve the optimal privacy guarantees for all the points.

A limitation of relying on this analysis technique is that the privacy guarantees apply only to the algorithm that outputs the last iterate of SGD. In contrast, the optimization guarantees usually apply to the average of all the iterates (see Section \ref{sec:no-average-privacy} for an example in which the privacy guarantees for the average iterate are much worse than those for the last iterate). Thus the second tool we rely on is the recent work of Jain et al.~\cite{JainNN19} showing that, for an appropriate choice of step sizes in SGD, the last iterate has the (asymptotically) optimal excess population loss. (Without the special step sizes the last iterate has excess loss larger by a $\log n$ factor \cite{shamir2013stochastic,HarveyLPR19}.) See Section \ref{sec:batches} for additional details of this approach.

\paragraph{No Privacy Amplification for the Average Iterate:}
It is natural to ask if the last iterate analysis is really needed or if the average iterate itself be proven to have good privacy properties. In \cref{sec:no-average-privacy}, we address this question and show that in general the average iterate can be very non-private even when the noise is sufficient to give strong privacy guarantees for the last iterate.

\paragraph{Localization:}
Our second approach is based on an (implicit) reduction to an easier problem of localizing an approximate minimizer of the population loss. Specifically, the reduction is to a differentially private algorithm that given a point $w_0$ that is within distance $R$ from the minimizer of the loss, finds a point $\hat w$ that is within distance $R/2$ from a point that approximately minimizes the loss. By iteratively using such a localizing algorithm with appropriately chosen parameters, a sufficiently good solution will be found after a logarithmic number of applications of the algorithm. Each application operates on its own subset of the dataset and thus this reduction preserves the privacy guarantees of the localizing algorithm.

A simple way to implement a localization algorithm is to start with non-private SCO algorithm whose output has optimal $L_2$ sensitivity. Namely, solutions produced by the algorithm on any two datasets that differ in one point are at distance on the order of $R/\sqrt{n}$ (this property is also referred to as {\em uniform stability} in the parameter space). Given such an algorithm one can simply add Gaussian noise to the output. This is a standard approach to differentially private optimization referred to as output perturbation~\cite{chaudhuri2011differentially,wu2017bolt}. However, for the purposes of localization, we only need to be within $R/2$ of the solution output by the algorithm and so we can add much more noise than in the standard applications, thereby getting substantially better privacy guarantees.

We note that in order to ensure that the addition of Gaussian noise localizes the solution with probability at least $1-\alpha$ we would need to increase the noise variance by an additional $\ln(1/\alpha)$ factor making the resulting rate suboptimal by a logarithmic factor. Thus, instead we rely on the fact that for algorithms based on SGD the bound on excess loss can be stated in terms of the second moment of the distance to the optimum.

We can now plug in existing uniformly stable algorithms for SCO. Specifically, it is known that under mild smoothness assumptions, one-pass SGD finds a solution that both achieves optimal bounds on the excess population loss and stability \cite{hardt2015train,feldman2019high}. This leads to the second algorithm satisfying the guarantees in Theorem~\ref{thm:main-intro}. See Section \ref{sec:localize} for additional details of this approach.

\paragraph{Non-smooth case:}
Both of our algorithms require essentially the same and relatively mild smoothness assumption: namely that the smoothness parameter is at most $\sqrt{n}$ (ignoring the scaling with $D,L$ and for simplicity focusing on the case when $d=O(n)$ and $\eps = 1$).
Bassily et al.~\cite{BassilyFTT19} show that optimal rates are still achievable even without this smoothness assumption. Their algorithm for the problem relies on using the prox operator instead of gradient steps which is known to be equivalent to gradient steps on the loss function smoothed via the Moreau-Yosida envelope. Unfortunately, computing the prox step with sufficient accuracy requires many gradient computations and very high accuracy is needed due to potential error accumulation. As a result, implementing the algorithm in \cite{BassilyFTT19} requires $O(n^{4.5})$ gradient computations.
Our reduction based technique gives an alternative and simpler way to deal with the non-smooth case. One can simply plug in a uniformly stable algorithms for SCO in the non-smooth case from \cite{shalev2010learnability}. This algorithm relies on solving ERM with an added strongly convex $\lambda \|w\|_2^2$ term. In this case the analysis of the accuracy to which the ERM needs to be solved is straightforward. However achieving such accuracy with high probability requires $O(n^2)$ gradient computations thus giving an $O(n^2)$ algorithm for the non-smooth version of our problem. Improving this running time is a natural avenue for future work. We remark that finding a faster uniformly stable (non-private) SCO for the non-smooth case is an interesting problem in itself.

\paragraph{Strongly convex case:}
When the loss functions are strongly convex, the optimal (non-private) excess population loss is of the order of $O(1/n)$ rather than $O(1/\sqrt{n})$. The excess loss due to privacy is known to be $\Omega(d/\eps^2 n^2)$. The best known upper bounds for this problem due to~\cite{bassily2014differentially} are $O(\sqrt{d}/\eps n)$. We show a nearly linear time algorithm that has excess loss matching the known lower bounds. As in the convex case, when $d \leq n$, privacy has virtually no additional cost in terms of utility or efficiency. We describe several approaches that achieve these bounds (up to, possibly, a logarithmic overhead). The first approach is based on a folklore reduction to the convex case which can then be used with any of our algorithms for the (non-strongly-convex) convex case. We also give two direct algorithms that rely on a new analysis of SGD with fixed step-size in the strongly convex case. The first algorithm uses iterative localization approach and the second one relies on privacy amplification by iteration.

\section{Preliminaries}
\label{sec:prelims}

\subsection{Convex Loss Minimization}
\label{sec:convOpt}
Let $\cX$ be the domain of data sets, and $\cP$ be a distribution over $\cX$. Let $S=(x_1,\dots,x_n)$ be a dataset drawn i.i.d.\ from~$\cP$. Let $\K\subseteq\Re^d$ be a convex set denoting the space of all models. Let $f\colon \K\times \cX\to\Re$ be a loss function, which is convex in its first parameter (the second parameter is a data point and dependence on this parameter can be arbitrary).
The excess population loss of solution $w$ is defined as
\[
\bE_{x\sim\cP}\left[f(w,x)\right]-\min_{v\in\K}\bE_{x\sim\cP}\left[f(v,x)\right].
\]
In order to argue differential privacy we place certain assumptions on the loss function. To that end, we need the following two definitions of Lipschitz continuity and smoothness.

\begin{defn}[$L$-Lipschitz continuity]
	A function $f\colon\K\to\Re$ is \emph{$L$-Lipschitz continuous} over the domain $\K\subseteq \Re^d$ if the following holds for all $w, w' \in\K$: $\left|f(w)-f(w)\right|\leq L\left\|w-w'\right\|_2$.
	\label{def:lipCont}
\end{defn}

\begin{defn}[$\beta$-smoothness]
	A function $f\colon\K\to\Re$ is \emph{$\beta$-smooth} over the domain $\K\subseteq \Re^d$ if for all $w, w' \in\K$, $\| \nabla f(w) - \nabla f(w')\|_2 \leq \beta \left\|w-w'\right\|_2$.
	\label{def:smoothness}
\end{defn}

\subsection{Probability Measures}
\label{sec:measure}

In this work, we will primarily be interested in the $d$-dimensional Euclidean space $\Re^d$ endowed with the $\ell_2$ metric and the Lebesgue measure.
We say a distribution $\mu$ is \emph{absolutely continuous} with respect to $\nu$ if $\mu(A) = 0$ whenever $\nu(A) = 0$ for all measurable sets $A$. We will denote this by $\mu \ll \nu$.

Given two distributions $\mu$ and $\nu$ on a Banach space $(\cZ,\|\cdot\|)$, one can define several notions of distance between them. The primary notion of distance we consider is \Renyi divergence:
\begin{defn}[\Renyi Divergence~\cite{Renyi61}]\label{def:renyi}
Let $1<\alpha<\infty$ and $\mu, \nu$ be measures with $\mu \ll \nu$. The \Renyi divergence of order $\alpha$ between $\mu$ and $\nu$ is defined as
\[
\Dalpha{\mu}{\nu} \doteq \frac{1}{\alpha-1} \ln \int \left(\frac{\mu(z)}{\nu(z)}\right)^{\alpha} \nu(z)\, \ud z.
\]
Here we follow the convention that $\frac 0 0 = 0$. If $\mu \not\ll \nu$, we define the \Renyi divergence to be $\infty$. \Renyi divergence of orders $\alpha=1,\infty$ is defined by continuity.
\end{defn}

\subsection{(\texorpdfstring{\Renyi}{Renyi}) Differential Privacy}
\label{sec:privacy_prelims}

The notion of differential privacy is by now a de facto standard for statistical data privacy~\cite{DMNS06,Dwork06,DR14-book}.  %
\begin{defn}[\cite{DMNS06,DKMMN06}]\label{def:diffPrivacy}
	A randomized  algorithm $\A$ is \ed-differentially private (\ed-DP) if, for all datasets $S$ and $S'$ that differ in a single data element and for all events $\cO$ in the output space of $\A$, we have
	\[
	\Pr[\A(S)\in \cO] \leq e^{\eps} \Pr[\A(S')\in \cO] +\delta.
	\]
\end{defn}

Starting with Concentrated Differential Privacy~\cite{DworkRothblum-CDP}, definitions that allow more fine-grained control of the privacy loss random variable have proven useful. The notions of zCDP~\cite{BS16-zCDP}, Moments Accountant~\cite{DLDP}, and \Renyi differential privacy (RDP)~\cite{mironov2017renyi} capture versions of this definition. This approach improves on traditional $(\eps, \delta)$-DP accounting in numerous settings, often leading to significantly tighter privacy bounds as well as being applicable when the traditional approach fails~\cite{PATE, PATE2}. %
\begin{defn}[\cite{mironov2017renyi}]\label{def:rDiffPrivacy}
	For $1\leq \alpha\leq \infty$ and $\eps \geq 0$,  a randomized  algorithm $\A$ is \emph{$(\alpha,\eps)$-\Renyi differentially private}, or $(\alpha,\eps)$-RDP if for all neighboring data sets $S$ and $S'$ we have
	\[
	\Dalpha{\A(S)}{\A(S')}\leq \eps.
	\]
\end{defn}

The following two lemmas allow translating \Renyi differential privacy to \ed-differential privacy, and give a composition rule for RDP.
\begin{lem}[\cite{mironov2017renyi,BS16-zCDP}]
	\label{lem:rdp_to_dp}
	If $\A$ satisfies $(\alpha,\eps)$-\Renyi differential privacy, then for all $\delta \in (0,1)$ it also satisfies $\big(\eps+\frac{\ln(1/\delta)}{\alpha-1},\delta\big)$-DP. In particular, if $\A$ satisfies $(\alpha, \alpha \rho^2/2)$-RDP for every $\alpha \geq 1$ then for all $\delta \in (0,1)$ it also satisfies $(\rho^2/2 + \rho \sqrt{2\ln(1/\delta)},\delta)$-DP.
\end{lem}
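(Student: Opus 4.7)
The plan is to prove the standard Rényi-to-$(\eps,\delta)$ conversion via a Chernoff-type argument applied directly to the density ratio, and then obtain the ``in particular'' statement by minimizing the resulting $\eps$ bound over the order $\alpha$.

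For the first claim, write $\mu=\A(S)$ and $\nu=\A(S')$, and set $\eps' = \eps + \ln(1/\delta)/(\alpha-1)$. The natural strategy is to split an arbitrary output event $\cO$ along the ``bad'' set where the density ratio is too large. Define
\[
E \doteq \left\{ z : \mu(z)/\nu(z) > e^{\eps'} \right\},
\]
so that on $E^c$ one has $\mu(\cO \cap E^c) \le e^{\eps'}\nu(\cO \cap E^c) \le e^{\eps'}\nu(\cO)$ directly from the definition of $E$. It therefore suffices to show $\mu(E) \le \delta$, after which
\[
\mu(\cO) \le \mu(\cO \cap E^c) + \mu(E) \le e^{\eps'}\nu(\cO) + \delta,
\]
which is exactly $(\eps',\delta)$-DP. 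To bound $\mu(E)$, I would use the Rényi bound in Chernoff form: on $E$ we have $(\mu/\nu)^{\alpha-1} > e^{(\alpha-1)\eps'}$, so
\[
e^{(\alpha-1)\eps'}\,\mu(E) \;\le\; \int_E \Big(\tfrac{\mu}{\nu}\Big)^{\alpha-1} \mu \;=\; \int_E \Big(\tfrac{\mu}{\nu}\Big)^{\alpha} \nu \;\le\; e^{(\alpha-1)\eps},
\]
where the final inequality is Definition~\ref{def:renyi} combined with the RDP assumption $\Dalpha{\mu}{\nu}\le \eps$. Rearranging gives $\mu(E)\le e^{-(\alpha-1)\ln(1/\delta)/(\alpha-1)} = \delta$, as desired.

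For the ``in particular'' part, I would simply instantiate the first statement with the family of RDP guarantees $\eps(\alpha)=\alpha\rho^2/2$ available for all $\alpha\ge 1$, yielding the $\delta$-indexed family of $(\eps,\delta)$-DP guarantees
\[
\eps_\delta(\alpha) \;=\; \frac{\alpha\rho^2}{2} + \frac{\ln(1/\delta)}{\alpha-1},
\]
and then choose $\alpha$ to minimize this. Setting the derivative to zero gives $(\alpha-1)^2 = 2\ln(1/\delta)/\rho^2$, i.e.\ $\alpha = 1 + \sqrt{2\ln(1/\delta)}/\rho$ (which lies in the admissible range $\alpha\ge 1$ since $\rho,\delta>0$). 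Substituting back, the two summands become $\rho^2/2 + \rho\sqrt{2\ln(1/\delta)}/2$ and $\rho\sqrt{2\ln(1/\delta)}/2$, whose sum is exactly $\rho^2/2 + \rho\sqrt{2\ln(1/\delta)}$, matching the claimed bound.

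There is no serious obstacle here: the only subtlety is getting the Chernoff-type step cleanly---in particular noticing that the $(\mu/\nu)^{\alpha-1}\cdot\mu$ rewriting is what converts the tail bound on the density ratio under $\mu$ into the moment bound provided by the Rényi divergence under $\nu$. The optimization step for the second claim is elementary calculus, and one should also verify boundary behaviour (as $\delta\to 1$ the bound degenerates harmlessly to $\rho^2/2$, and as $\delta\to 0$ it grows like $\rho\sqrt{2\ln(1/\delta)}$, which is the expected Gaussian tail scaling consistent with $\rho^2/2$-zCDP).
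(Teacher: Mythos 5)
Your proof is correct: the Markov/Chernoff bound on the bad set $E$ where the density ratio exceeds $e^{\eps'}$, combined with the rewriting $(\mu/\nu)^{\alpha}\nu=(\mu/\nu)^{\alpha-1}\mu$, is exactly the standard conversion argument, and your optimization over $\alpha$ for the ``in particular'' clause checks out, with the minimizer $\alpha=1+\sqrt{2\ln(1/\delta)}/\rho$ lying in the admissible range. The paper states this lemma as a cited result without proof, and your argument is precisely the one given in the cited references, so there is nothing to add.
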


The standard composition rule for \Renyi differential privacy, when the  outputs
of all algorithms are revealed, takes the following form.
\begin{lem}[\cite{mironov2017renyi}] 
If $\A_1,\dots,\A_k$ are randomized algorithms satisfying, respectively,
$(\alpha,\eps_1)$-RDP,\dots,$(\alpha,\eps_k)$-RDP, then their composition
defined as $(\A_1(S),\dots,\A_k(S))$ is $(\alpha,\eps_1+\dots+\eps_k)$-RDP.
Moreover, the $i$'th algorithm can be chosen on the basis of the outputs of
$\A_1,\dots,\A_{i-1}$.
\end{lem}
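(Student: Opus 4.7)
The plan is to prove this composition lemma via induction on $k$, with the key step being a chain-rule style computation for \Renyi divergence applied to joint distributions of adaptively chosen outputs. Fix neighboring datasets $S, S'$ and let $\mu, \nu$ denote the distributions of the full output tuple $(\A_1(S),\dots,\A_k(S))$ and $(\A_1(S'),\dots,\A_k(S'))$ respectively. Factor each distribution as a product of conditionals: writing $y_{<i}=(y_1,\dots,y_{i-1})$, we have $\mu(y_1,\dots,y_k) = \prod_{i=1}^k \mu_i(y_i \mid y_{<i})$ and similarly for $\nu$, where $\mu_i(\cdot \mid y_{<i})$ is the distribution of $\A_i(S)$ given the prior outputs (here we use that the choice of algorithm $\A_i$ may itself depend on $y_{<i}$, but once it is fixed, the $(\alpha,\eps_i)$-RDP guarantee applies).

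The base case $k=1$ is just the definition of RDP. For the inductive step, apply Definition~\ref{def:renyi} directly to $\mu$ and $\nu$ and unfold the joint density as a product of conditionals:
\begin{align*}
e^{(\alpha-1)\Dalpha{\mu}{\nu}}
&= \int \prod_{i=1}^{k} \left(\frac{\mu_i(y_i \mid y_{<i})}{\nu_i(y_i \mid y_{<i})}\right)^{\alpha} \prod_{i=1}^{k} \nu_i(y_i \mid y_{<i}) \, \ud y_1 \cdots \ud y_k.
\end{align*}
Peel off the innermost integral over $y_k$: conditioned on any $y_{<k}$, the inner integral is exactly $\exp\!\bigl((\alpha-1)\Dalpha{\mu_k(\cdot\mid y_{<k})}{\nu_k(\cdot\mid y_{<k})}\bigr)$, and by the RDP guarantee of $\A_k$ (applied after conditioning on $y_{<k}$, which determines the choice of $\A_k$), this is bounded by $\exp((\alpha-1)\eps_k)$ uniformly in $y_{<k}$. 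Pull this uniform bound outside the remaining integral, which then reduces to the analogous quantity for the composition of $\A_1,\dots,\A_{k-1}$.

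By the induction hypothesis applied to that composition, the remaining integral is at most $\exp((\alpha-1)(\eps_1+\dots+\eps_{k-1}))$. Combining the two bounds yields $e^{(\alpha-1)\Dalpha{\mu}{\nu}} \leq \exp((\alpha-1)(\eps_1+\dots+\eps_k))$, which after taking logs and dividing by $\alpha-1$ is exactly the claim. The main subtlety — really the only point that needs care — is handling the adaptive choice of $\A_i$: the argument works because the RDP guarantee of each $\A_i$ is pointwise in the randomness of the previous rounds, so conditioning on $y_{<i}$ fixes both the identity of $\A_i$ and the distributions $\mu_i(\cdot\mid y_{<i}), \nu_i(\cdot\mid y_{<i})$ before the RDP bound is invoked. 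The boundary cases $\alpha=1$ (KL divergence, additivity is well-known) and $\alpha=\infty$ (max-divergence, additivity is immediate) follow by continuity as in Definition~\ref{def:renyi}.
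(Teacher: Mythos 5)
Your proof is correct, and it is essentially the standard argument from the cited reference (Mironov, 2017); the paper itself gives no proof of this lemma, only the citation. The one point that genuinely requires care — that the adaptive choice of $\A_i$ and its conditional output distribution are both fixed once one conditions on $y_{<i}$, so the per-round RDP bound can be applied uniformly before peeling off the next integral — is handled correctly.
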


\subsection{Contractive Noisy Iteration}

We start by recalling the definition of a contraction.
\begin{defn}[Contraction]
For a Banach space $(\cZ,\|\cdot\|)$, a function $\eff \colon \cZ \to \cZ$ is said to be {\em contractive} if it is 1-Lipschitz. Namely,
for all $x, y \in \cZ$,
\[
\|\eff(x) - \eff(y)\| \leq \|x - y\|.
\]
\end{defn}

A canonical example of a contraction is projection onto a convex set in the Euclidean space.
\begin{proposition}
\label{prop:proj}
Let $\K$ be a convex set in $\Re^d$. Consider the {\em projection operator}:
\[
\Pi_{\K}(x) \doteq \arg \min_{y \in \K} \|x - y\|.
\]
The map $\Pi_{\K}$ is a contraction.
\end{proposition}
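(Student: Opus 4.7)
The plan is to prove the 1-Lipschitz property of $\Pi_\K$ via the standard first-order optimality characterization of the Euclidean projection. The key geometric fact is that for any $x \in \Re^d$ and any $z \in \K$, the projection $p \doteq \Pi_\K(x)$ satisfies the variational inequality $\langle x - p, z - p\rangle \leq 0$. This follows because $p$ minimizes the convex quadratic $y \mapsto \tfrac12 \|x-y\|^2$ on $\K$, so the gradient $p - x$ at $p$ must make a nonnegative inner product with any feasible direction $z - p$.

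First, I would apply this characterization twice, once at each projection. Writing $p \doteq \Pi_\K(x)$ and $q \doteq \Pi_\K(y)$, I take $z = q$ in the inequality for $p$ and $z = p$ in the inequality for $q$, obtaining
\[
\langle x - p, \, q - p\rangle \leq 0, \qquad \langle y - q, \, p - q\rangle \leq 0.
\]
Adding these two inequalities (the second after flipping signs to introduce $q - p$) gives
\[
\langle (x - y) - (p - q), \, q - p \rangle \leq 0,
\]
which rearranges to
\[
\|p - q\|^2 \leq \langle x - y, \, p - q\rangle.
\]

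Finally, I would invoke Cauchy--Schwarz on the right-hand side to conclude $\|p - q\|^2 \leq \|x - y\| \cdot \|p - q\|$, and divide by $\|p - q\|$ (the bound being trivial if $p = q$) to obtain $\|\Pi_\K(x) - \Pi_\K(y)\| \leq \|x - y\|$, which is exactly the contraction property.

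There isn't really a hard step here: the only subtlety is ensuring the minimizer defining $\Pi_\K$ exists and is unique, which is standard for nonempty closed convex subsets of $\Re^d$ by strict convexity of the squared Euclidean norm. The proof is a direct two-line application of the first-order optimality condition plus Cauchy--Schwarz.
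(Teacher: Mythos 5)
Your proof is correct: it is the standard variational-inequality argument (apply the first-order optimality condition at both projections, add, and use Cauchy--Schwarz), which is exactly the classical derivation the paper implicitly relies on when it states this proposition as a well-known fact without proof. Nothing is missing; the existence/uniqueness remark for nonempty closed convex $\K$ is the right caveat.
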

Another example of a contraction, which will be important in our work, is a gradient descent step for a smooth convex function. The following is a standard result in convex optimization~\cite{nesterov-book}.
\edef\prop-smooth-contract{\the\value{lem}}
\begin{proposition}\label{prop:smooth-contract}Suppose that a function $f\colon \Re^d \to \Re$ is convex and $\beta$-smooth. Then the function $\eff$ defined~as:
	\[
	\eff(w) \doteq w - \eta \nabla_w f(w)
	\]
	is contractive as long as $\eta \leq 2/\beta$.
\end{proposition}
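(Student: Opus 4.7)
The plan is to prove this by expanding $\|\eff(w) - \eff(w')\|^2$ and using the co-coercivity property of gradients of convex smooth functions. Specifically, I will show that
\[
\|\eff(w) - \eff(w')\|^2 = \|w - w'\|^2 - 2\eta \langle \nabla f(w) - \nabla f(w'), w - w' \rangle + \eta^2 \|\nabla f(w) - \nabla f(w')\|^2,
\]
and then upper-bound the middle (inner product) term to absorb the $\eta^2$ term when $\eta \leq 2/\beta$.

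The key ingredient I would invoke is the standard Baillon--Haddad-style co-coercivity inequality: for convex $\beta$-smooth $f$,
\[
\langle \nabla f(w) - \nabla f(w'), w - w' \rangle \;\geq\; \tfrac{1}{\beta}\,\|\nabla f(w) - \nabla f(w')\|^2.
\]
If this inequality is taken as a known fact from standard convex analysis (it is, e.g., in Nesterov's book cited in the excerpt), the proof is immediate: substituting into the expansion above yields
\[
\|\eff(w) - \eff(w')\|^2 \leq \|w - w'\|^2 - \eta\bigl(\tfrac{2}{\beta} - \eta\bigr)\|\nabla f(w) - \nabla f(w')\|^2,
\]
and the last term is non-positive whenever $\eta \leq 2/\beta$, giving $\|\eff(w) - \eff(w')\| \leq \|w - w'\|$.

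If co-coercivity is not assumed as a black box, the main (and only real) obstacle is justifying it. I would derive it by considering the auxiliary convex functions $h_{w_0}(w) \doteq f(w) - \langle \nabla f(w_0), w \rangle$, which are each $\beta$-smooth and minimized at $w_0$. The descent lemma for $\beta$-smooth functions gives $h_{w_0}(w) - h_{w_0}(w_0) \leq \tfrac{\beta}{2}\|w-w_0\|^2$, while convexity combined with minimality at $w_0$ gives a matching lower bound of $\tfrac{1}{2\beta}\|\nabla h_{w_0}(w)\|^2$ via a one-step-of-gradient-descent argument. Applying this with $w_0 = w$ and $w_0 = w'$ and adding the two resulting inequalities yields exactly the co-coercivity bound, after which the argument proceeds as above. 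I expect essentially no further obstacle: once co-coercivity is in hand, the rest is a one-line expansion of the squared norm.
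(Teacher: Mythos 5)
Your proof is correct. The paper does not actually prove this proposition---it cites it as a standard result from Nesterov's book---and the argument you give (expand the squared norm, apply the Baillon--Haddad co-coercivity inequality $\langle \nabla f(w)-\nabla f(w'),\,w-w'\rangle \geq \tfrac{1}{\beta}\|\nabla f(w)-\nabla f(w')\|^2$, and observe the residual term $-\eta(\tfrac{2}{\beta}-\eta)\|\nabla f(w)-\nabla f(w')\|^2$ is non-positive for $\eta\leq 2/\beta$) is precisely the standard proof being referenced, including your derivation of co-coercivity from the descent lemma applied to the auxiliary functions $h_{w_0}$.
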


We will be interested in a class of iterative stochastic processes where we alternate between adding noise and applying some contractive map.
\begin{defn}[Contractive Noisy Iteration (\CNI)]Given an initial random state $X_0 \in \cZ$, a sequence of contractive functions $\eff_t\colon \cZ \to \cZ$, and a sequence of noise distributions $\{\dist_t\}$, we define the Contractive Noisy Iteration (\CNI) by the following update rule:
\[
X_{t+1} \doteq \eff_{t+1}(X_t) + Z_{t+1},
\]
where $Z_{t+1}$ is drawn independently from $\dist_{t+1}$.
For brevity, we will denote the random variable output by this process after $T$ steps as $\CNI_T(X_0, \efs, \dists)$.
\end{defn}

As usual, we denote by $\mu \conv \nu$ the convolution of $\mu$ and $\nu$, that is the distribution of the sum $X+Y$ where we draw $X \sim \mu$ and $Y \sim \nu$ independently.
\begin{defn}
\label{def:renyi-max}
For a noise distribution $\dist$ over a Banach space $(\cZ,\|\cdot\|)$ we measure the magnitude of noise by considering the function that for $a >0$, measures the largest \Renyi divergence of order $\alpha$ between $\dist$ and the same distribution $\dist$ shifted by a vector of length at most $a$:
$$R_\alpha(\dist,a) \doteq \sup_{x\colon \|x\| \leq a} \Dalpha{\dist \conv \mathbf{x}}{\dist} .$$
\end{defn}

We denote the standard Gaussian distribution over $\Re^d$ with variance $\sigma^2$ by $\normal{0}{\sigma^2\Id_d}$. By the well-known properties of Gaussians, for any $x\in \Re^d$, and $\sigma$, $\Dalpha{\normal{0}{\sigma^2\Id_d}}{\normal{x}{\sigma^2 \Id_d}} = \alpha \|x\|_2^2 / 2\sigma^2$. This implies that in the Euclidean space, $R_\alpha(\normal{0}{\sigma^2\Id_d},a) = \frac{\alpha a^2}{2\sigma^2}$.

When $U$ and $V$ are sampled from $\mu$ and $\nu$ respectively, we will often abuse notation and write $\Dalpha{U}{V}$.

\subsection{Privacy Amplification by Iteration}
\label{ss:main-result}
The main result in \cite{FeldmanMTT18} states that
\begin{thm}
\label{thm:pai-general}
Let $X_T$ and $X'_T$ denote the output of $\CNI_T(X_0, \efs, \dists)$ and $\CNI_T(X_0, \efps, \allowbreak \dists)$. Let $s_t \doteq \sup_{x} \|\eff_t(x) - \eff'_t(x)\|$. Let $a_1,\ldots, a_T$ be a sequence of reals and let $z_t \doteq \sum_{i \leq t} s_i - \sum_{i \leq t} a_i$. If $z_t \geq 0$ for all $t$, then
\[
\Dalpha{X_T}{X'_T} \leq \sum_{t=1}^{T} R_\alpha(\dist_t,a_t).
\]
\end{thm}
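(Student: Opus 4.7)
The plan is to introduce a \emph{shifted} R\'enyi divergence
\[
D^{(z)}_\alpha(\mu \,\|\, \nu) \doteq \inf\{D_\alpha(\tilde\mu \,\|\, \nu) : W_\infty(\mu,\tilde\mu) \leq z\},
\]
where $W_\infty$ is the $\infty$-Wasserstein distance, and to prove by induction on $t \in \{0,1,\dots,T\}$ the invariant
\[
D^{(z_t)}_\alpha(X_t \,\|\, X'_t) \;\leq\; \sum_{i=1}^{t} R_\alpha(\dist_i, a_i).
\]
The base case is immediate because $X_0$ is shared by the two processes and $z_0 = 0$. The theorem follows at $t = T$ by taking the boundary $z_T = 0$, for which $D^{(0)}_\alpha = D_\alpha$; the hypothesis $z_t \geq 0$ for every $t$ is precisely what keeps the invariant well-defined along the way.

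For the inductive step I would split a \CNI{} round into a contraction sub-step and a convolution sub-step. In the \emph{contraction sub-step}, let $\tilde X_{t-1}$ attain the inductive infimum, so that $W_\infty(X_{t-1},\tilde X_{t-1}) \leq z_{t-1}$ and $D_\alpha(\tilde X_{t-1} \,\|\, X'_{t-1})$ meets the inductive bound. Since $\eff_t$ is $1$-Lipschitz we have $W_\infty(\eff_t(X_{t-1}), \eff_t(\tilde X_{t-1})) \leq z_{t-1}$, and since $\sup_x\|\eff_t(x)-\eff'_t(x)\|\leq s_t$ the trivial pointwise coupling gives $W_\infty(\eff_t(\tilde X_{t-1}), \eff'_t(\tilde X_{t-1})) \leq s_t$. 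A triangle inequality in $W_\infty$ then exhibits a witness at distance at most $z_{t-1}+s_t$ from $\eff_t(X_{t-1})$, while the data-processing inequality for R\'enyi divergence gives $D_\alpha(\eff'_t(\tilde X_{t-1}) \,\|\, \eff'_t(X'_{t-1})) \leq D_\alpha(\tilde X_{t-1} \,\|\, X'_{t-1})$. Together these yield $D^{(z_{t-1}+s_t)}_\alpha(\eff_t(X_{t-1}) \,\|\, \eff'_t(X'_{t-1})) \leq \sum_{i<t} R_\alpha(\dist_i,a_i)$.

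The \emph{convolution sub-step} is the crux of the argument and rests on a ``shift-reduction'' lemma of the form
\[
D^{(z-a)}_\alpha(\mu \ast \dist \,\|\, \nu \ast \dist) \;\leq\; D^{(z)}_\alpha(\mu \,\|\, \nu) + R_\alpha(\dist, a),
\]
valid whenever $z - a \geq 0$. Applied with shift $z_{t-1}+s_t$ and reduction $a_t$ to the pre-noise divergence from the first sub-step, it immediately closes the induction since $(z_{t-1}+s_t)-a_t = z_t$. The hard part will be proving this shift-reduction lemma itself: one takes an optimal $W_\infty$-coupling of slack $z$ between $\mu$ and some $\tilde\mu$, disintegrates it so that the pointwise displacement of magnitude up to $a$ can be ``paid for'' by shifting the noise distribution (costing at most $R_\alpha(\dist,a)$ by the definition of $R_\alpha$), while the residual displacement of magnitude $z-a$ is retained as a $W_\infty$-coupling for the next iteration. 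Executing this cleanly amounts to a joint-convexity / likelihood-ratio manipulation for $D_\alpha$ around convolutions; once it is in place, composing the two sub-steps and summing over $t$ yields the stated bound.
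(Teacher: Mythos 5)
Your proposal is correct and follows essentially the same route as the proof this paper relies on: \cref{thm:pai-general} is imported from Feldman et al.~(2018) without proof, and your shifted \Renyi divergence, the contraction/data-processing sub-step, and the shift-reduction lemma are exactly the ingredients of that argument. The only point worth flagging is that concluding $D_\alpha = D^{(0)}_\alpha$ at $t=T$ requires $z_T=0$, which you correctly invoke even though the theorem statement only says $z_t\ge 0$; this extra condition is implicit in how the result is used (\cref{cor:cni-quadratic} picks the $a_i$ so that $\sum_{i} a_i = s_t$).
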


We now give a simple corollary of this general theorem for the case when the iterative processes differ in a single index and, in addition, the noise distribution with parameter $\sigma$ ensures that \Renyi divergence for a shift of $a$ scales as $a^2/\sigma^2$. As discussed above, this is exactly the case for Gaussian distribution.
\begin{cor}
\label{cor:cni-quadratic}
Let $X_T$ and $X'_T$ denote the output of $\CNI_T(X_0, \efs, \dists)$ and $\CNI_T(X_0, \efps, \allowbreak \dists)$. Let $s_i \doteq \sup_{x} \|\eff_i(x) - \eff'_i(x)\|$. Assume that there exists $t\in[T]$ such that for all $i\neq t$, $s_i = 0$. For $\alpha \geq 1$ assume that there exists $\notbeta$ such that for every $\zeta>0$ and $a\geq 0$, and $i \in [T]$, $R_\alpha(\dist_i,a) \leq \notbeta \frac{a^2}{\sigma_i^2}$ for some $\sigma_i$. Then
\[
\Dalpha{X_T}{X'_T} \leq  \notbeta \frac{s_t^2}{\sum_{i=t}^{T} \sigma_i^2}.
\]
\end{cor}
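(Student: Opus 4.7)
The plan is to invoke Theorem~\ref{thm:pai-general} and then optimize the free parameters $(a_i)_{i=1}^T$ over the admissible region determined by the single-step-difference structure.

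First I would handle the indices before $t$. Because $s_i = 0$ for every $i \neq t$, the constraint $z_{t'} \geq 0$ for $t' < t$ becomes $\sum_{i \leq t'} a_i \leq 0$; combined with the implicit requirement $a_i \geq 0$ (so that $R_\alpha(\dist_i, a_i)$ is well defined in the sense of Definition~\ref{def:renyi-max}), this forces $a_i = 0$ for all $i < t$. For $i \geq t$ the remaining constraints collapse into the single condition $\sum_{i=t}^{t'} a_i \leq s_t$ for all $t' \geq t$, and since we are minimizing a sum of nonnegative terms, we may freely impose $\sum_{i=t}^T a_i = s_t$. Plugging into Theorem~\ref{thm:pai-general} and using the hypothesis $R_\alpha(\dist_i, a) \leq \notbeta a^2 / \sigma_i^2$,
\[
\Dalpha{X_T}{X'_T} \;\leq\; \sum_{i=t}^T R_\alpha(\dist_i, a_i) \;\leq\; \notbeta \sum_{i=t}^T \frac{a_i^2}{\sigma_i^2}.
\]

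The second step is a one-line Cauchy--Schwarz optimization of the right-hand side subject to $\sum_{i=t}^T a_i = s_t$. Writing $a_i = (a_i/\sigma_i)\,\sigma_i$,
\[
s_t^2 \;=\; \Bigl(\sum_{i=t}^T a_i\Bigr)^{\!2} \;\leq\; \Bigl(\sum_{i=t}^T \tfrac{a_i^2}{\sigma_i^2}\Bigr)\Bigl(\sum_{i=t}^T \sigma_i^2\Bigr),
\]
so $\sum_{i=t}^T a_i^2/\sigma_i^2 \geq s_t^2 / \sum_{i=t}^T \sigma_i^2$, with equality at the choice $a_i = s_t \sigma_i^2 / \sum_{j=t}^T \sigma_j^2$. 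Substituting back produces the claimed bound $\notbeta s_t^2 / \sum_{i=t}^T \sigma_i^2$.

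I do not expect a real obstacle: the corollary is essentially a direct specialization of the main PAI theorem in which the feasibility region for the $a_i$'s decouples nicely. The only subtle point is recognizing that the constraints $z_{t'} \geq 0$ for $t' < t$ force $a_i = 0$ for indices preceding the step at which the two processes diverge, which is why only the tail $\sum_{i \geq t} \sigma_i^2$ appears in the denominator. Everything else is Cauchy--Schwarz, and the extremal $a_i \propto \sigma_i^2$ makes intuitive sense: one should spread the ``shift budget'' $s_t$ over later steps in proportion to how much noise is available to absorb it.
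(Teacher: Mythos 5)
Your proposal is correct and follows essentially the same route as the paper: both invoke Theorem~\ref{thm:pai-general} with $a_i=0$ for $i<t$ and $a_i = s_t\sigma_i^2/\sum_{v=t}^T\sigma_v^2$ for $i\geq t$, the only difference being that the paper simply plugs in this choice and computes, whereas you derive it as the Cauchy--Schwarz optimum of $\sum_{i\ge t} a_i^2/\sigma_i^2$ subject to $\sum_{i\ge t} a_i = s_t$ — a harmless (and arguably more illuminating) elaboration of the same argument.
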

\begin{proof}
We use Theorem \ref{thm:pai-general} with $a_i=0$ for $i<t$ and $a_i = \frac{s_t \sigma_i^2}{\sum_{v=t}^{T} \sigma_v^2}$.
The resulting bound we get

\[
\Dalpha{X_T}{X'_T} \leq \sum_{i=t}^{T} \notbeta \frac{a_i^2}{\sigma_i^2} = \sum_{i=t}^{T} \notbeta \left(\frac{s_t \sigma_i^2}{\sum_{i=t}^{T} \sigma_i^2} \right)^2 \cdot \frac{1}{\sigma_i^2} = \notbeta s_t^2 \sum_{i=t}^{T} \frac{\sigma_i^2}{\left(\sum_{i=t}^{T} \sigma_i^2 \right)^2} = \notbeta \frac{s_t^2}{\sum_{i=t}^{T} \sigma_i^2}
.\qedhere
\]
\end{proof}
\section{DP SCO via Privacy Amplification by Iteration}
\label{sec:batches}
We start by describing a general version of noisy SGD and analyze its privacy using the privacy amplification by iteration technique from \cite{FeldmanMTT18}.
Recall that in our problem we are given a family of convex loss functions over some convex set $\cK \subseteq \Re^d$ parameterized by $x \in \cX$, that is $f(w,x)$ is convex and differentiable in the first parameter for every $x \in \cX$. Given a dataset $S=(x_1,\ldots,x_n)$, starting point $w_0$, a number of steps $T$, batch size parameters $B_1,\ldots,B_T$ such that $B_t$ are positive integers and $\sum_{t\in[T]} B_t=n$, step sizes $\eta_1,\ldots,\eta_T$, and noise scales $\sigma_1,\ldots,\sigma_T$ the algorithm works as follows. Starting from $w_0 \in \K$ perform the following update $v_{t+1}\doteq w_t - \eta_{t+1} (\nabla_w F_{t+1}(w_t) + \xi_{t+1})$ and $w_{t+1}\doteq\Pi_\K(v_{t+1})$, where $(1)$ $F_{t+1}$ is the average of loss functions for samples in batch $t+1$, that is
$$ F_{t+1}(w) \doteq \frac{1}{B_{t+1}} \sum_{i =1+\sum_{s \leq t} B_s}^{i = \sum_{s \leq t+1} B_s} f(w,x_i) ;$$
$(2)$ $\xi_{t+1}$ is a freshly drawn sample from $\N(0,\sigma_{t+1}^2\Id_d)$; and $(3)$, $\Pi_\K$ denotes the Euclidean projection to set $\K$. We refer to this algorithm as PNSGD$(S,w_0,\{B_t\}, \{\eta_t\},\{\sigma_t\})$ and describe it formally in \cref{alg:ogd}. For a value $a$ we denote the fixed sequence of parameters $(a,\ldots,a)$ of length $T$ by $\{a\}$.
\begin{algorithm}[htb]
	\caption{Projected noisy stochastic gradient descent (PNSGD)}
	\begin{algorithmic}[1]
		\REQUIRE Data set $S=\{x_1,\ldots,x_n\}$, $f\colon \K\times\cX\to\Re$ a convex function in the first parameter, starting point $w_0\in\K$, batch sizes $\{B_t\}$, step sizes $\{\eta_t\}$, noise parameters $\{\sigma_t\}$.
		\FOR{$t\in\{0,\dots,n-1\}$}
			\STATE $v_{t+1}\leftarrow w_t-\eta_{t+1}(\nabla_w F_{t+1}(w_t))+\xi_{t+1})$, where $\xi_{t+1} \sim \cN(0,\sigma_{t+1}^2\Id_d)$.
            \STATE $w_{t+1}\leftarrow \Pi_{\K}\left(v_{t+1}\right)$, where $\Pi_{\K}(w)=\argmin_{\theta\in\K}\|\theta-w\|_2$ is the $\ell_2$-projection on $\K$.
		\ENDFOR
		 \RETURN the final iterate $w_n$.
	\end{algorithmic}
	\label{alg:ogd}
\end{algorithm}

\subsection{Privacy Guarantees for Noisy SGD}
\label{sec:privacy}
As in \cite{FeldmanMTT18}, the key property that allows us to treat noisy gradient descent as a contractive noisy iteration is the fact that for any convex function, a gradient step is contractive as long as the function satisfies a relatively mild smoothness condition (see \cref{prop:smooth-contract}). In addition, as is well known, for any convex set $\cK \in \Re^d$, the (Euclidean) projection to $\cK$ is contractive (see \cref{prop:proj}). Naturally, a composition of two contractive maps is a contractive map and therefore we can conclude that PNSGD$(S,w_0,\{B_t\},\{\eta_t\},\{\sigma_t\})$ is an instance of contractive noisy iteration. More formally, consider the sequence $v_0=w_0,v_1,\ldots,v_n$. In this sequence, $v_{t+1}$ is obtained from $v_t$ by first applying a contractive map that consists of projection to $\K$ followed by the gradient step at $w_t$ and then addition of Gaussian noise of scale $\eta_{t+1} \cdot \sigma_{t+1}$. Note that the final output of the algorithm is $w_n=\Pi_\K(v_n)$ but it does not affect our analysis of privacy guarantees as it can be seen as an additional post-processing step.

More formally, for this algorithm we prove the following privacy guarantees.
\begin{thm}
\label{thm:general-privacy}
Let $\K \subseteq \Re^d$ be a convex set and $\{f(\cdot,x)\}_{x\in \cX}$ be a family of convex $L$-Lipschitz and $\beta$-smooth functions over $\K$. Then, for every batch-size sequence $\{B_t\}_{t\in[T]}$, step-size sequence $\{\eta_t\}_{t\in[T]}$ such that $\eta_t \leq 2/\beta$ for all $t\in [T]$, noise parameters $\{\sigma_t\}_{t\in[T]}$, $\alpha \geq 1$, starting point $w_0\in \K$, and $S\in \cX^n$, PNSGD$(S,w_0,\{B_t\},\{\eta_t\},\{\sigma_t\})$ satisfies $ \left(\alpha, \alpha \cdot \rho^2/2 \right)$-RDP, where $$\rho = 2 L \cdot \max_{t\in [T]} \left\{\frac{\eta_t}{B_t \sqrt{\sum_{s=t}^{T} \eta_s^2 \sigma_s^2} }\right\}.$$
\end{thm}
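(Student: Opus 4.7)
My plan is to realize PNSGD as a Contractive Noisy Iteration on the \emph{pre-projection} iterates $v_0 \doteq w_0, v_1, \dots, v_T$, apply \Cref{cor:cni-quadratic} to a pair of neighbouring datasets, and finally observe that returning $w_n = \Pi_\K(v_n)$ is post-processing.

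First I would rewrite the update rule as
\[
v_{t+1} \;=\; \Pi_\K(v_t) \;-\; \eta_{t+1}\,\nabla_w F_{t+1}\bigl(\Pi_\K(v_t)\bigr) \;+\; Z_{t+1},
\]
where $Z_{t+1} \sim \N(0, \eta_{t+1}^2 \sigma_{t+1}^2 \Id_d)$ (absorbing the $\eta_{t+1}$ into the noise scale). Setting $\eff_{t+1}(v) \doteq \Pi_\K(v) - \eta_{t+1}\nabla_w F_{t+1}(\Pi_\K(v))$, the map $\eff_{t+1}$ is the composition of the projection $\Pi_\K$ (contractive by \Cref{prop:proj}) and a gradient step on the $\beta$-smooth convex $F_{t+1}$ with step size $\eta_{t+1} \leq 2/\beta$ (contractive by \Cref{prop:smooth-contract}, since the average of $\beta$-smooth convex functions is $\beta$-smooth convex). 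Hence each $\eff_{t+1}$ is contractive and the sequence $\{v_t\}$ is a CNI with noise distributions $\dist_t = \N(0, \eta_t^2\sigma_t^2 \Id_d)$.

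Next I fix neighbouring datasets $S, S'$ differing only at index $i^\star$, and let $t^\star$ be the (unique) batch containing index $i^\star$. For every $t \neq t^\star$ the batch averages coincide, so the contractive maps $\eff_t = \eff'_t$ and hence $s_t = 0$. For $t = t^\star$, using $L$-Lipschitzness (which bounds gradient norms by $L$) and the fact that $F_{t^\star}$ and $F'_{t^\star}$ differ in one of $B_{t^\star}$ summands,
\[
\sup_{v} \bigl\| \eff_{t^\star}(v) - \eff'_{t^\star}(v) \bigr\|
= \eta_{t^\star}\sup_{w\in\K}\bigl\|\nabla F_{t^\star}(w) - \nabla F'_{t^\star}(w)\bigr\|
\leq \frac{2L\,\eta_{t^\star}}{B_{t^\star}} \;=:\; s_{t^\star}.
\]
So the hypothesis of \Cref{cor:cni-quadratic} is satisfied with a single non-zero $s_t$.

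Finally, for Gaussian noise with variance $\tau_t^2 = \eta_t^2 \sigma_t^2$ the remark after \Cref{def:renyi-max} gives $R_\alpha(\dist_t, a) = \alpha a^2/(2\tau_t^2)$, i.e.\ the corollary's constant is $\notbeta = \alpha/2$ with the role of $\sigma_t$ played by $\tau_t = \eta_t\sigma_t$. Plugging in,
\[
\Dalpha{v_T}{v'_T}
\;\leq\; \frac{\alpha}{2}\cdot\frac{s_{t^\star}^2}{\sum_{s=t^\star}^{T}\eta_s^2\sigma_s^2}
\;\leq\; \frac{\alpha}{2}\cdot 4L^2 \max_{t\in[T]}\left(\frac{\eta_t}{B_t\sqrt{\sum_{s=t}^{T}\eta_s^2\sigma_s^2}}\right)^{\!2}
\;=\; \frac{\alpha \rho^2}{2},
\]
since we do not know $t^\star$ and must take the worst case. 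Because $w_n = \Pi_\K(v_n)$ is a deterministic post-processing, the same RDP bound holds for the algorithm's output, yielding the claimed $(\alpha,\alpha\rho^2/2)$-RDP.

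The routine but most error-prone step is the bookkeeping in step two: verifying that shifting only one gradient in the \emph{average} gives sensitivity $2L/B_{t^\star}$ (not $L/B_{t^\star}$ or $2L$) and that the gradient-step contractivity allows $\eta_{t+1}$ up to $2/\beta$ under the stated smoothness of each $f(\cdot,x)$. Beyond that, the argument is a clean reduction to the privacy-amplification-by-iteration machinery already established in \Cref{thm:pai-general} and \Cref{cor:cni-quadratic}.
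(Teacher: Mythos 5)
Your proposal is correct and follows essentially the same route as the paper: the same realization of PNSGD as a CNI on the pre-projection iterates with maps $\Pi_\K(v) - \eta_t \nabla F_t(\Pi_\K(v))$ and noise $\N(0,\eta_t^2\sigma_t^2\Id_d)$, the same sensitivity bound $2L\eta_t/B_t$ for the single differing batch, and the same application of \cref{cor:cni-quadratic} with $\notbeta=\alpha/2$, followed by the post-processing observation for the final projection. No gaps.
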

\begin{proof}
For $k \in [n]$, let $S\doteq (x_1,\ldots,x_n)$ and $S'\doteq (x_1,\ldots,x_{k-1},x'_k,x_{k+1},\ldots,x_n)$ be two arbitrary datasets that differ at index $k$ and let $t$ be the index of the batch in which $k$-th example is used by PNSGD with batch-size sequence $\{B_t\}_{t\in[T]}$.
Note that each $F_{t}$ is an average of $\beta$-smooth, $L$-Lipschitz convex functions and thus is itself $\beta$-smooth, $L$-Lipschitz and convex over $\K$. Thus, as discussed above, under the condition $\eta_t \leq 2/\beta$, the steps of PNSGD$(S,w_0,\{B_t\},\{\eta_t\},\{\sigma_t\})$ are a contractive noisy iteration. Specifically, on the dataset $S$, the \CNI is defined by the initial point~$w_0$, sequence of functions $g_t(w) \doteq \Pi_\K(w) - \eta_t \nabla F_t(\Pi_\K(w))$ and sequence of noise distributions $\dist_t = \N(0,(\eta_t \sigma_t)^2\Id_d)$. Similarly, on the dataset $S'$, the $\CNI$ is defined in the same way with the exception of $g'_t(w) \doteq \Pi_\K(w) - \eta_t \nabla F'_t(\Pi_\K(w))$, where $F'_t$ includes loss function for $x'_k$ instead of $x_k$. Namely, $F'_t(w) = F'_t(w) + (f(w,x'_k) - f(w,x_k))/B_t$.

By our assumption, $f(w,x)$ is $L$-Lipschitz for every $x\in \cX$ and $w\in \K$ and therefore
  \[
\sup_{w} \|g_t(w) - g'_t(w)\|_2 = \frac{\eta_t}{B_t} \sup_{w} \| \nabla f(\Pi_\K(w), x_k) - \nabla f(\Pi_\K(w), x'_k) \|_2 \leq \frac{2\eta_t L}{B_t} .
\]

We can now apply Corollary \ref{cor:cni-quadratic} with $\notbeta = \alpha/2$. Note that $s_t\leq \frac{2\eta_t L}{B_t}$ and  thus we obtain that
\[
\Dalpha{X_n}{X'_n} \leq \frac{\alpha}{2} \cdot \frac{4 L^2 \eta_t^2}{B_t^2} \cdot \frac{1}{\sum_{s=t}^{T} \eta_s^2 \sigma_s^2 } = \frac{2 \alpha L^2 \eta_t^2}{B_t^2 \cdot \sum_{s=t}^{T} \eta_s^2 \sigma_s^2 }.
\]
Maximizing this expression over all indices $i \in [n]$ gives the claim.
\end{proof}

The important property of this analysis is that it allows for batch size to be used to improve the privacy guarantees. The specific batch size choice depends on the step sizes and noise rates. Next we describe the setting of these parameters that ensures convergence at the optimal rate.
\subsection{Utility Guarantees for the Last Iterate of SGD}
\label{subsec:optimization}
In order to analyze the performance of the noisy projected gradient descent algorithm we will use the convergence guarantees for the last iterate of SGD given in \cite{shamir2013stochastic,JainNN19}.
For the purpose of these results we let $F(w)$ be an arbitrary convex function over $\K$ for which we are given an unbiased stochastic (sub-)gradient oracle $G$. That is for every $w\in \K$, $\E[G(w)] \in \partial F(w)$. Let PSGD$(G,w_0,\{\eta_t\}_{t\in [T]})$ denote the execution of the following process: starting from point $w_0$, use the update $w_{t+1} \doteq \Pi_\K(w_t + \eta_{t+1} G(w_t))$ for $t=0,\ldots,T-1$.
Shamir and Zhang \cite{shamir2013stochastic} prove that the suboptimality of the last iterate of SGD with the step size $\eta_t$ being proportional to $1/\sqrt{t}$ scales as $(\log T)/\sqrt{T}$. This variant of SGD relies on relatively large step sizes in the early iterates which would translate into a relatively strong assumption on smoothness in Theorem \ref{thm:general-privacy}. However, it is known \cite{Harvey:19pc} that the analysis in \cite{shamir2013stochastic} also applies to the fixed step size $\eta_t$ scaling as $1/\sqrt{T}$ (in fact, it is simpler and gives a slightly better constants in this case).

\begin{thm}[\cite{shamir2013stochastic}]
\label{thm:sco-last-sz}
Let $\K \subseteq \Re^d$ be a convex body of diameter $D$, let $F(w)$ be an arbitrary convex function over $\K$ and let $G$ be an unbiased stochastic (sub-)gradient oracle $G$ for $F$. Assume that for every $w\in \K$, $\E[\|G(w)\|_2^2]\leq L_G^2$. For $T\in \Nat$ and $w_0 \in \K$, let $w_1,\ldots,w_T$ denote the iterates produced by PSGD$(G,w_0,\{D/(L_G \sqrt{T})\})$. Then
\[
\E[F(w_T)] \leq F^* + \frac{DL_G (2+\ln T)}{\sqrt{T}},
\]
where $F^* \doteq \min_{w\in \K} F(w)$ and the expectation is taken over the randomness of $G$.
\end{thm}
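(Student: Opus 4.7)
The theorem is the Shamir--Zhang last-iterate bound specialised to a fixed step size, and I would reproduce the tail-averaging argument of \cite{shamir2013stochastic}. The only analytic input is the standard one-step SGD inequality; everything else is algebraic telescoping and a harmonic-sum estimate.

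The starting point is the usual one-step bound. By non-expansiveness of $\Pi_\K$ (Proposition~\ref{prop:proj}) and convexity of $F$, conditioning on $w_t$ gives $\E[\|w_{t+1}-u\|_2^2 \mid w_t] \leq \|w_t-u\|_2^2 - 2\eta(F(w_t)-F(u)) + \eta^2 \E[\|G(w_t)\|_2^2\mid w_t]$. Telescoping from step $k$ to step $T-1$, using $\E[\|G\|_2^2]\leq L_G^2$, and the diameter bound $\|w_k-u\|_2\leq D$ for $u\in\K$,
\[
\sum_{t=k}^{T-1}\E[F(w_t)-F(u)] \;\leq\; \frac{\E[\|w_k-u\|_2^2]}{2\eta} + \frac{\eta(T-k)L_G^2}{2}. \qquad (\star)
\]
Taking $u = \argmin_\K F$, $k=0$, and $\eta = D/(L_G\sqrt T)$ recovers the familiar average-iterate bound $\tilde S_0 - F^* \leq DL_G/\sqrt T$, where $\tilde S_0 \doteq \tfrac{1}{T}\sum_{t=0}^{T-1}\E[F(w_t)]$. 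Note that $(\star)$ also implies $\|\nabla F\|_2\leq L_G$, hence $F$ is $L_G$-Lipschitz, by Jensen applied to $\E[G(w)\mid w]\in\partial F(w)$.

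To pass from the average to the last iterate, introduce the tail averages $S_k \doteq \tfrac{1}{T-k+1}\sum_{t=k}^T \E[F(w_t)]$, so that $S_T = \E[F(w_T)]$. The algebraic identity $(T-k+1)S_k = \E[F(w_k)] + (T-k)S_{k+1}$ rearranges to
\[
S_{k+1} - S_k \;=\; \frac{1}{(T-k+1)(T-k)}\sum_{t=k+1}^T \E[F(w_t)-F(w_k)].
\]
The sum over $t\in\{k+1,\ldots,T-1\}$ is controlled by applying $(\star)$ conditionally on $w_k$ with $u = w_k\in\K$, giving $\sum_{t=k+1}^{T-1}\E[F(w_t)-F(w_k)] \leq \tfrac12 \eta(T-k)L_G^2$. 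The remaining term $\E[F(w_T)-F(w_k)]$ is bounded via Lipschitzness together with the pathwise step bound $\|w_T-w_k\|_2 \leq \eta\sum_{t=k}^{T-1}\|G(w_t)\|_2$ (so that $\E\|w_T-w_k\|_2 \leq \eta(T-k)L_G$), yielding $\E[F(w_T)-F(w_k)] \leq \eta(T-k)L_G^2$. Combining, $S_{k+1}-S_k \leq \tfrac{3\eta L_G^2}{2(T-k+1)}$, which telescopes to $\sum_{k=0}^{T-1}(S_{k+1}-S_k) \leq \tfrac{3}{2}\eta L_G^2(1+\ln T)$ via the harmonic-series estimate.

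Finally, I close the loop. Writing $(T+1)S_0 = T\tilde S_0 + \E[F(w_T)]$ makes $\E[F(w_T)]$ appear on the right-hand side of $\E[F(w_T)] - F^* = (S_0 - F^*) + \sum_{k=0}^{T-1}(S_{k+1}-S_k)$; moving the resulting $\tfrac{\E[F(w_T)] - F^*}{T+1}$ term to the left and using $(T+1)/T\leq 2$ gives $\E[F(w_T)] - F^* \leq DL_G/\sqrt T + 3\eta L_G^2(1+\ln T)$, which with $\eta = D/(L_G\sqrt T)$ is $O(DL_G(1+\ln T)/\sqrt T)$. The main obstacles are (i) the self-referential appearance of $\E[F(w_T)]$ inside $S_0$, which forces the rearrangement just described, and (ii) getting the precise constant $2+\ln T$ promised by the theorem; for the latter, I would replace my crude bound on $\E[F(w_T)-F(w_k)]$ by the Shamir--Zhang inductive inequality $\E[F(w_T)] - S_k \leq \tfrac{\eta L_G^2}{2}\sum_{j=k}^{T-1}\tfrac{1}{T-j}$, proved by reverse induction on $k$ from $(\star)$, which shaves the constant factor without changing the logical structure.
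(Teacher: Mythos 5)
The paper does not actually prove this statement: it is imported from Shamir and Zhang (2013), with only a remark (attributed to a personal communication of Harvey) that their suffix-averaging analysis carries over to the fixed step size $\eta = D/(L_G\sqrt{T})$. Your reconstruction is precisely that analysis --- the one-step inequality $(\star)$, the suffix averages $S_k$, the recursion for $S_{k+1}-S_k$ bounded by re-applying $(\star)$ with comparator $u=w_k$ conditioned on $w_k$, and the harmonic telescoping --- and it is sound, including the correct handling of the self-referential appearance of $\E[F(w_T)]$ inside $S_0$. The one quantitative caveat is your treatment of the boundary term $\E[F(w_T)-F(w_k)]$, which falls outside $(\star)$ because no update is performed at $t=T$: bounding it via Lipschitzness of $F$ and $\E\|w_T-w_k\|_2\le \eta(T-k)L_G$ is valid but lossy, and your final constant is roughly $(4+3\ln T)$ in place of the stated $(2+\ln T)$. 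The standard repair, which you correctly identify at the end, is to extend the telescoping through $t=T$ (equivalently, to use the one-step inequality for a virtual $(T{+}1)$-st iterate and drop the nonnegative term $\|w_{T+1}-u\|_2^2$), which yields $\sum_{t=k}^{T}\E[F(w_t)-F(w_k)]\le \tfrac{1}{2}\eta(T-k+1)L_G^2$ directly, hence $S_{k+1}-S_k\le \tfrac{1}{2}\eta L_G^2/(T-k)$, and recovers the claimed constant with room to spare. With that adjustment the proof is complete; as written it establishes the theorem only up to a universal constant factor, which is all the paper actually uses downstream.
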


\newcommand{\etajnn}[1]{\bar\eta_{\mathrm{JNN}}(#1)}
\newcommand{\Bjnn}[1]{\bar B_{\mathrm{JNN}}(#1)}

Further, Jain et al.~\cite{JainNN19} show that the $\ln T$ factor can be eliminated by using faster decaying rates. Their step-size schedule is defined as follows.
\begin{defn}
\label{def:jnn-rates}
For an integer $T$, let $\ell = \lceil \log_2{T} \rceil$. For $0 \leq i \leq \ell$, let $T_i = T- \lceil T \cdot 2^{-i} \rceil$ and let $T_{\ell+1} = T$. For a constant $c$, every $0 \leq i \leq \ell$ and $T_i < t \leq T_{i+1}$, we define $\eta_t = \frac{c 2^{-i}}{\sqrt{T}}$. We denote the resulting sequence of step sizes by $\etajnn{c}$.
\end{defn}
Jain et al.~\cite{JainNN19} prove that the following guarantees hold for SGD with step sizes given by $\etajnn{c}$.
\begin{thm}[\cite{JainNN19}]
\label{thm:sco-last-jnn}
Let $\K \subseteq \Re^d$ be a convex body of diameter $D$, let $F(w)$ be an arbitrary convex function over $\K$ and let $G$ be an unbiased stochastic (sub-)gradient oracle $G$ for $F$. Assume that for every $w\in \K$, $\E[\|G(w)\|_2^2]\leq L_G^2$. For $T\in \Nat$ and $w_0 \in \K$, let $w_1,\ldots,w_T$ denote the iterates produced by PSGD$(G,w_0,\etajnn{D/L_G})$. Then
\[
\E[F(w_T)] \leq F^* + \frac{15 DL_G}{\sqrt{T}},
\]
where $F^* \doteq \min_{w\in \K} F(w)$ and the expectation is taken over the randomness of $G$.
\end{thm}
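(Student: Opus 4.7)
I would prove this by a phase-based analysis following the schedule in \cref{def:jnn-rates}: partition $\{1,\ldots,T\}$ into phases $P_0,\ldots,P_\ell$ where $P_i = \{T_i+1,\ldots,T_{i+1}\}$ has length $\tau_i = \Theta(T/2^i)$ and uses constant step size $\eta_i = (D/L_G)\cdot 2^{-i}/\sqrt T$ throughout. The aim is to control $\E F(w_{T_{i+1}})-F^*$ phase by phase and assemble the contributions via the geometric structure of the schedule.

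First, within each phase $P_i$ I would invoke the one-step descent identity for projected SGD on a convex $F$ with stochastic gradient second moment bounded by $L_G^2$:
\[
\E\|w_{t+1}-w^*\|_2^2 \leq \E\|w_t-w^*\|_2^2 - 2\eta_i\bigl(\E F(w_t)-F^*\bigr) + \eta_i^2 L_G^2.
\]
Summing this over $t\in P_i$ and dividing by $\tau_i$ yields the standard average-iterate bound
\[
\frac{1}{\tau_i}\sum_{t\in P_i}\bigl(\E F(w_t)-F^*\bigr)\leq \frac{\E\|w_{T_i}-w^*\|_2^2}{2\eta_i \tau_i}+\frac{\eta_i L_G^2}{2}.
\]

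Next, to convert the average-iterate bound into a last-iterate bound, I would apply the Shamir--Zhang suffix-averaging identity: for any $1\leq k\leq \tau_i$ and any constant step size $\eta_i$,
\[
\E F(w_{T_{i+1}}) \leq \frac{1}{k}\sum_{t=T_{i+1}-k+1}^{T_{i+1}} \E F(w_t) + \eta_i L_G^2 \sum_{j=1}^{\tau_i-k} \frac{1}{j+1}.
\]
The naive choice of a single $k$ per phase produces an overhead of $O(\eta_i L_G^2 \log \tau_i)$, which summed over the $\ell+1\approx\log T$ phases would recover the Shamir--Zhang bound $O(DL_G\log T/\sqrt T)$. The JNN refinement is to chain this identity across phases: applying it recursively, one relates $\E F(w_{T_{i+1}})$ to the average over $P_{i+1}$ (starting from $w_{T_{i+1}}$ itself), then to $\E F(w_{T_{i+2}})$, and so on. Because $\eta_{i+1}=\eta_i/2$ and $\tau_{i+1}=\tau_i/2$, the resulting variance overheads form a convergent geometric series instead of a logarithmic one.

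Finally I would combine the pieces. Using $\|w_{T_i}-w^*\|_2\leq D$ (from the projection onto $\K$) and the choices $\eta_i\tau_i = \Theta((D/L_G)\sqrt T\cdot 4^{-i})$ in the per-phase bound, each phase contributes $O(DL_G\cdot 2^{-i}/\sqrt T)$ to the final excess loss after the suffix-averaging chain. Summing $\sum_{i\geq 0} 2^{-i}\leq 2$ yields the final $O(DL_G/\sqrt T)$ bound; tracking constants through the descent lemma, the suffix-averaging identity, and the geometric summation produces the explicit factor $15$. The main obstacle is the last step: the per-phase accounting naively loses a $\log T$ factor, and extracting the constant-factor bound requires the recursive cross-phase telescoping of the variance terms, which is the key technical contribution of \cite{JainNN19} and depends on the exact geometric matching of phase length to step size in \cref{def:jnn-rates}.
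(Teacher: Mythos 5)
First, a point of reference: the paper does not prove this theorem at all. It is imported verbatim from \cite{JainNN19}; the only original content in the paper surrounding the statement is the remark that the original result assumes an almost-sure bound on $\|G(w)\|_2$ and a fixed oracle, and that the in-expectation version survives with only the second-moment bound $\E[\|G(w)\|_2^2]\le L_G^2$ and time-varying oracles. Your one-step descent inequality already uses only the second moment, so that adaptation is implicitly handled in your sketch. The ingredients you list --- the phase decomposition of \cref{def:jnn-rates}, the per-phase average-iterate bound, the Shamir--Zhang suffix-averaging recursion, and the observation that geometrically decaying step sizes turn the harmonic overhead into a convergent series --- are indeed the ingredients of the proof in \cite{JainNN19}.

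However, your final assembly contains a genuine error, and the step you defer is the entire proof. Concretely: with $\eta_i\tau_i=\Theta((D/L_G)\sqrt{T}\cdot 4^{-i})$, the distance term in your per-phase bound is
\[
\frac{\E\|w_{T_i}-w^*\|_2^2}{2\eta_i\tau_i} \;=\; \Theta\!\left(\frac{DL_G\,4^{i}}{\sqrt{T}}\right)
\]
when you bound $\|w_{T_i}-w^*\|_2\le D$. This \emph{grows} geometrically in $i$; summed over $i\le\ell=\lceil\log_2 T\rceil$ it is $\Theta(DL_G T^{3/2})$, not $O(DL_G/\sqrt{T})$. Only the variance term $\eta_i L_G^2/2$ decays like $2^{-i}/\sqrt{T}$, so your claim that ``each phase contributes $O(DL_G\cdot 2^{-i}/\sqrt{T})$'' does not follow from the displayed bounds. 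The resolution in \cite{JainNN19} is that the full-diameter comparison to $w^*$ is paid exactly once; every subsequent step of the chain compares a suffix average to the value at the suffix's own starting iterate (so the distance term vanishes), and the per-phase cost of extending the suffix by one phase is $O(\eta_i L_G^2)$ because the phase length $\tau_i$ is comparable to the length of the suffix it is prepended to. Your middle paragraph gestures at this telescoping but your last paragraph reverts to the naive per-phase comparison against $w^*$, which fails. Since you also explicitly punt on the recursive cross-phase argument as ``the key technical contribution of \cite{JainNN19},'' the proposal as written is an outline that both defers and, in its quantitative accounting, misstates the crux.
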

We remark that the results in \cite{JainNN19} are stated for an oracle $G$ that gives (sub)-gradients bounded by $G_L$ almost surely. This condition is necessary for the high-probability version of their result but a bound on the variance of $G$ suffices to upper bound $\E[F(w_T)]$. In addition, while the results are stated for a fixed gradient oracle, the same results hold when a different stochastic gradient oracle $G_t$ is used in step $t$ as long as all the oracles satisfy the assumptions (namely, $\E[G_t(w)] \in \partial F(w)$ and $\E[\|G_t(w)\|_2^2]\leq L_G^2$ for all $t$).

\subsection{Snowball-SGD}
Finally we derive the privacy and utility guarantees for noisy SGD by calculating the batch sizes needed to ensure the privacy guarantees for the settings in Theorems \ref{thm:sco-last-sz} and \ref{thm:sco-last-jnn}. The sum of batch sizes in turn gives us the number of samples $n$ necessary to implement $T$ steps of these algorithms. The resulting batch sizes will be proportional to $\sqrt{d/(T-t+1)}$ and we refer to such batch size schedule as Snowball-SGD.

\begin{thm}
\label{thm:sz-privacy-utility}
Let $\K \subseteq \Re^d$ be a convex set of diameter $D$ and $\{f(\cdot,x)\}_{x\in \cX}$ be a family of convex $L$-Lipschitz and $\beta$-smooth functions over $\K$. For $T \in \Nat$, $\rho > 0$, and all $t\in [T]$ let $B_t = \lceil 2\sqrt{d/(T-t+1)}/\rho \rceil$, $n = \sum_{t\in [T]} B_t$, $\eta = D/(L \sqrt{2T})$, $\sigma = L/\sqrt{d}$,
If $\eta \leq 2/\beta$ then for all $\alpha \geq 1$, starting point $w_0\in \K$, and $S\in \cX^n$, PNSGD$(S,w_0,\{B_t\},\{\eta\},\{\sigma\})$ satisfies $ \left(\alpha, \alpha \cdot \rho^2/2 \right)$-RDP. Further, if $S$ consists of samples drawn i.i.d.~from a distribution $\cP$, then $n \leq T + 4\sqrt{d T}/\rho$ and
\[
\E[F(w_T)] \leq F^* + \frac{\sqrt{8} D L (2+\ln T)}{\sqrt{T}} \leq F^* + \sqrt{32} D L \cdot \ln(10n) \cdot \left(\frac{1}{\sqrt{n}} + \frac{2\sqrt{d}}{\rho n} \right),
\]
where, for all $w\in \K$, $F(w) \doteq \E_{x \sim \cP}[f(w,x)]$, $F^* \doteq \min_{w\in \K} F(w)$ and the expectation is taken over the random choice of $S$ and noise added by PNSGD.
\end{thm}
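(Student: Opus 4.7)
}

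\emph{Privacy.} I would first observe that the hypotheses of \cref{thm:general-privacy} are satisfied: each $f(\cdot,x)$ is $L$-Lipschitz and $\beta$-smooth, and the choice $\eta=D/(L\sqrt{2T})$ satisfies $\eta\le 2/\beta$ by assumption. Applying that theorem with constant step-size and noise sequences $\eta_t\equiv\eta$, $\sigma_t\equiv\sigma=L/\sqrt{d}$, the RDP parameter becomes
\[
\rho_{\max}=2L\cdot\max_{t\in[T]}\frac{\eta}{B_t\,\eta\sigma\sqrt{T-t+1}}=2\max_{t\in[T]}\frac{\sqrt{d}}{B_t\sqrt{T-t+1}}.
\]
The choice $B_t=\lceil 2\sqrt{d/(T-t+1)}/\rho\rceil\ge 2\sqrt{d/(T-t+1)}/\rho$ makes every term in the max at most $\rho$, so the algorithm is $(\alpha,\alpha\rho^2/2)$-RDP.

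\emph{Sample count.} Using $\lceil x\rceil\le x+1$ and $\sum_{j=1}^T j^{-1/2}\le 2\sqrt{T}$, I would bound
\[
n=\sum_{t=1}^T B_t\le T+\frac{2\sqrt{d}}{\rho}\sum_{j=1}^T\frac{1}{\sqrt{j}}\le T+\frac{4\sqrt{dT}}{\rho},
\]
giving the claimed bound on $n$.

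\emph{Utility.} The key observation is that because $S$ is i.i.d.\ and the batches in PNSGD partition $S$, the gradient used at step $t+1$ is $G_{t+1}(w_t)=\nabla F_{t+1}(w_t)+\xi_{t+1}$, which is an unbiased stochastic (sub-)gradient oracle for the \emph{population} loss $F$: $\E[G_{t+1}(w_t)\mid w_t]=\nabla F(w_t)$. Using $L$-Lipschitzness and independence of $\xi_{t+1}$ from $w_t$, its second moment is at most $L^2+d\sigma^2=2L^2$, so $L_G\le L\sqrt{2}$. Note that $\eta=D/(L\sqrt{2T})=D/(L_G\sqrt{T})$ matches the step size prescribed in \cref{thm:sco-last-sz}. (The fact that a different oracle $G_t$ is used at each step does not affect the result, as remarked after \cref{thm:sco-last-jnn}.) Invoking \cref{thm:sco-last-sz} with $L_G=L\sqrt{2}$ gives
\[
\E[F(w_T)]\le F^*+\frac{D L_G(2+\ln T)}{\sqrt{T}}\le F^*+\frac{\sqrt{8}\,DL(2+\ln T)}{\sqrt{T}}.
\]

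\emph{Translating $1/\sqrt{T}$ into the $n$-dependent rate.} Finally I would turn $n\le T+4\sqrt{dT}/\rho$ into an upper bound on $1/\sqrt{T}$ by a simple case split: either $T\ge 16d/\rho^2$, in which case $n\le 2T$ and $1/\sqrt{T}\le\sqrt{2}/\sqrt{n}$; or $T\le 16d/\rho^2$, in which case $n\le 8\sqrt{dT}/\rho$ and $1/\sqrt{T}\le 8\sqrt{d}/(n\rho)$. In both cases $1/\sqrt{T}\lesssim 1/\sqrt{n}+\sqrt{d}/(n\rho)$. Combining with $T\le n$ (so $2+\ln T\le\ln(10n)$ since $e^2<10$) yields the stated final inequality up to the universal constant $\sqrt{32}$.

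\emph{Main obstacle.} There is no deep obstacle: the privacy part follows mechanically from \cref{thm:general-privacy} once the $B_t$ are chosen to equalize the per-index privacy loss, and the utility part is a direct invocation of the last-iterate SGD guarantee once one checks that the batch-plus-Gaussian gradient is an unbiased oracle for the population loss with second moment at most $2L^2$. The most delicate bookkeeping is the final case split relating $T$ to $n$, where one must be careful that the constants in the ``small $T$'' regime (coming from the $4\sqrt{dT}/\rho$ term) do not blow up and still fit inside the stated $\sqrt{32}\,DL\ln(10n)$ envelope.
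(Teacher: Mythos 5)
Your proposal is correct and follows essentially the same route as the paper's proof: privacy via Theorem~\ref{thm:general-privacy} with the $B_t$ chosen to equalize the per-index loss, the sample-count bound via $\lceil x\rceil\le x+1$ and $\sum_t t^{-1/2}\le 2\sqrt{T}$, and utility by viewing the noisy batch gradient as an unbiased population-gradient oracle with second moment $2L^2$ and invoking Theorem~\ref{thm:sco-last-sz}. The only cosmetic difference is the last step, where you use a case split on $T$ versus $16d/\rho^2$ while the paper algebraically inverts $n\le T+4\sqrt{dT}/\rho$ to get $T\ge n^2/(16d/\rho^2+4n)$; both yield $1/\sqrt{T}\lesssim 1/\sqrt{n}+\sqrt{d}/(\rho n)$ with the same (mildly loose, in both write-ups) constant bookkeeping.
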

\begin{proof}
We first establish the privacy guarantees. By Theorem \ref{thm:general-privacy}, all we need is to verify that for our choice of $\{B_t\},\sigma$ and $\eta$ we have for every $t\in [T]$,
$$4 L^2 \cdot \max_{t\in [T]} \left\{\frac{\eta^2}{B_t^2 \cdot \sum_{s=t}^{T} \eta^2 \sigma^2 }\right\} = 2 L^2 \max_{t\in [T]} \left\{\frac{1}{B_t^2 \cdot (T-t+1) L^2/d }\right\}
 \leq \rho^2.$$
This implies that $$n = \sum_{t\in [T]} \left\lceil \sqrt{\frac{4d}{\rho^2(T-t+1)}} \right\rceil \leq \sum_{t\in [T]} \sqrt{\frac{4d}{\rho^2(T-t+1)}}+1 = T + \frac{2\sqrt{d}}{\rho} \sum_{t\in [T]} \frac{1}{\sqrt{t}} \leq T + \frac{4\sqrt{d T}}{\rho} ,$$
where we used the fact that $\sum_{t\in [T]} \frac{1}{\sqrt{t}} \leq 2(\sqrt{T+1}-1) +1 \leq 2\sqrt{T}$.

To establish the utility guarantees, we first note that for all $t\in [T]$, $$\nabla F_t(w) = \frac{1}{B_t} \sum_{i =1+\sum_{s \leq t-1} B_s}^{i = \sum_{s \leq t} B_s} \nabla_w f(w,x_i) .$$ Thus for $S$ sampled i.i.d.~from $\cP$ and index $i$ in batch $t$, $\E[\nabla_w f(w,x_i)] = \nabla F(w)$. In particular, for $\xi_t\sim \N(0,\sigma^2)$, $\E[\nabla F_t(w) + \xi_t] = \nabla F(w)$ and therefore each $\nabla F_t(w) + \xi_t$ gives an independent sample from a stochastic gradient oracle for $F$. Our setting of the noise scale $\sigma = L/\sqrt{d}$ ensures that for every $t \in [T]$
$$\E_{S\sim \cP^n, \xi_t\sim \N(0,\sigma^2)}\left[\left\|\nabla F_t(w) + \xi_t\right\|_2^2\right]= \frac{\E_{x\sim \cP}\left[\left\|\nabla_w f(w,x)\right\|_2^2\right]}{B_t} + d \sigma^2 \leq \frac{L^2}{B_t} + L^2 \leq 2 L^2 .$$
This implies that for our choice of parameters PNSGD$(S,w_0,\{B_t\},\{\eta\},\{\sigma\})$ can be seen as an execution PSGD$(G,w_0,\{D/(L_G \sqrt{T})\})$ with stochastic gradient oracles with variance upper-bounded by $L_G^2= 2 L^2$. Plugging this value in Theorem \ref{thm:sco-last-sz} gives our bound on the utility of the algorithm.
To obtain the bound in terms of $n$ we note that
$n \leq  T + \frac{4\sqrt{d T}}{\rho}$, implies that $T \geq  \frac{n^2}{16d/\rho^2 + 4n}$ and thus
\begin{align*}
    \frac{1}{\sqrt{T}}
    \leq
    \frac{2}{\sqrt{n}} + \frac{4\sqrt{d}}{\rho n}
    .&\qedhere
\end{align*}
\end{proof}

Next, we give a differentially private version of the step-size schedule from \cite{JainNN19}. %

\begin{thm}
\label{thm:jnn-privacy-utility}
Let $\K \subseteq \Re^d$ be a convex set of diameter $D$ and $\{f(\cdot,x)\}_{x\in \cX}$ be a family of convex $L$-Lipschitz and $\beta$-smooth functions over $\K$. For $T \in \Nat$, $\rho > 0$, and all $t\in [T]$ let $B_t = \lceil 4\sqrt{3d/(T-t+1)}/\rho \rceil$, $n = \sum_{t\in [T]} B_t$, $\{\eta_t\} = \etajnn{D/(\sqrt{2}L)}$, $\sigma = L/\sqrt{d}$,
If $\eta_1 \leq 2/\beta$ then for all $\alpha \geq 1$, starting point $w_0\in \K$, and $S\in \cX^n$, PNSGD$(S,w_0,\{B_t\},\{\eta_t\},\{\sigma\})$ satisfies $\left(\alpha, \alpha \cdot \rho^2/2 \right)$-RDP. Further, if $S$ consists of samples drawn i.i.d.~from a distribution $\cP$, then $n \leq T + 4\sqrt{d T}/\rho$ and
\[
\E[F(w_T)] \leq F^* + \frac{15 \sqrt{2} D L}{\sqrt{T}} \leq 30 \sqrt{2} D L \cdot \left(\frac{1}{\sqrt{n}} + \frac{4\sqrt{3d}}{\rho n} \right),
\]
where, for all $w\in \K$, $F(w) \doteq \E_{x \sim \cP}[f(w,x)]$, $F^* \doteq \min_{w\in \K} F(w)$ and the expectation is taken over the random choice of $S$ and noise added by PNSGD.
\end{thm}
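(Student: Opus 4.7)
The structure mirrors the proof of Theorem \ref{thm:sz-privacy-utility}, with the role of the constant step size $\eta$ replaced by the JNN schedule $\etajnn{D/(\sqrt 2 L)}$. For the privacy claim, I would invoke Theorem \ref{thm:general-privacy}: since $\eta_t \leq \eta_1 \leq 2/\beta$ by assumption (JNN rates are non-increasing), PNSGD is a valid CNI and we need to verify
\[
\max_{t \in [T]} \frac{4L^2 \eta_t^2}{B_t^2 \sigma^2 \sum_{s=t}^{T} \eta_s^2} \;=\; \max_{t\in[T]}\frac{4d\,\eta_t^2}{B_t^2 \sum_{s=t}^{T}\eta_s^2} \;\leq\; \rho^2,
\]
using $\sigma = L/\sqrt d$. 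The chosen $B_t = \lceil 4\sqrt{3d/(T-t+1)}/\rho\rceil$ should precisely satisfy this inequality once we establish a suitable lower bound on $\sum_{s=t}^T \eta_s^2$ relative to $\eta_t^2$.

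The main technical step, and where I expect the bulk of the bookkeeping to lie, is proving that for the JNN schedule from \cref{def:jnn-rates},
\[
\frac{\eta_t^2}{\sum_{s=t}^{T} \eta_s^2} \;\leq\; \frac{12}{T-t+1}
\]
(or some comparable absolute constant). To see why this should hold, note that if $t$ lies in epoch $i$ (so $T_i < t \leq T_{i+1}$), then $\eta_t^2 = c^2 4^{-i}/T$ with $c = D/(\sqrt{2}L)$, while $T-t+1 \leq \lceil T\cdot 2^{-i}\rceil \leq 2T\cdot 2^{-i}$. On the other hand, each subsequent full epoch $j > i$ has length $T_{j+1}-T_j \geq \lfloor T\cdot 2^{-j-1}\rfloor$ and contributes $(T_{j+1}-T_j)\cdot c^2 4^{-j}/T = \Theta(c^2 8^{-j})$ to the sum, and the current epoch contributes at least $c^2 4^{-i}/T$ (the last step). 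Summing the geometric tail from $j=i+1$ to $\ell$ gives $\sum_{s=t}^T \eta_s^2 \geq \Omega(c^2 \cdot 2^{-3i})$, and combining with the epoch-containment bound $2^{-i} \geq (T-t+1)/(2T)$ yields the stated inequality with an explicit absolute constant. Plugging this back, the requirement on $B_t$ reduces to $B_t^2 \geq 48 d/(\rho^2 (T-t+1))$, which is exactly what the definition $B_t = \lceil 4\sqrt{3d/(T-t+1)}/\rho\rceil$ guarantees.

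The sample-count bound follows from the same telescoping as in Theorem \ref{thm:sz-privacy-utility}: using $\sum_{t\in[T]}(T-t+1)^{-1/2} = \sum_{s=1}^T s^{-1/2} \leq 2\sqrt{T}$ and the ceiling bound $\lceil x\rceil \leq x + 1$, we obtain an inequality of the form $n \leq T + O(\sqrt{dT}/\rho)$, which can then be inverted (via the quadratic-in-$\sqrt T$ argument from Theorem \ref{thm:sz-privacy-utility}) into $1/\sqrt T \leq O(1/\sqrt n + \sqrt d/(\rho n))$.

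Finally, for utility I would apply Theorem \ref{thm:sco-last-jnn} rather than Theorem \ref{thm:sco-last-sz}. Exactly as in the proof of Theorem \ref{thm:sz-privacy-utility}, for i.i.d.\ $S\sim\cP^n$, each noisy mini-batch gradient $\nabla F_t(w)+\xi_t$ is an unbiased stochastic (sub-)gradient of $F$, and its second-moment is at most $L^2/B_t + d\sigma^2 \leq 2L^2$, so we may invoke Theorem \ref{thm:sco-last-jnn} with $L_G = \sqrt{2}L$ and step-size constant $c = D/L_G = D/(\sqrt 2 L)$, yielding $\E[F(w_T)] \leq F^* + 15\sqrt{2}DL/\sqrt T$. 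Converting via the bound on $1/\sqrt T$ above gives the stated $30\sqrt{2}DL(1/\sqrt n + 4\sqrt{3d}/(\rho n))$ bound. The only genuinely new ingredient beyond Theorem \ref{thm:sz-privacy-utility} is the geometric-series argument sketched in the previous paragraph; everything else is a direct transcription.
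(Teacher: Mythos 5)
Your proposal is correct and follows essentially the same route as the paper: privacy via Theorem \ref{thm:general-privacy} reduced to the key estimate $\eta_t^2/\sum_{s=t}^T\eta_s^2 \leq 12/(T-t+1)$ (the paper obtains exactly this constant by bounding $\eta_s \geq c(T-s)/T^{3/2}$ and $\eta_t \leq 2c(T-t)/T^{3/2}$ and summing squares, rather than your epoch-by-epoch geometric series, but both work), then the same sample-count inversion and an application of Theorem \ref{thm:sco-last-jnn} with $L_G=\sqrt{2}L$. No gaps.
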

\begin{proof}
The utility guarantees for this algorithm follow from the same argument as in the proof of Theorem \ref{thm:sz-privacy-utility} together with Theorem \ref{thm:sco-last-jnn}. As before, by Theorem \ref{thm:general-privacy}, all we need to establish the privacy guarantees is to verify that for our choice of $\{B_t\},\sigma$ and $\{\eta_t\}$ we have for every $t\in [T]$,
\equ{
\label{eq:jnn-privacy-cond}
4 L^2 \cdot \max_{t\in [T]} \left\{\frac{\eta_t^2}{B_t^2 \cdot \sum_{s=t}^{T} \eta_s^2 \sigma^2 }\right\}
 \leq \rho^2.
}
We first observe that for $t=T$ we have that
\equ{\label{eq:case-T}
\frac{\eta_t^2}{B_t^2 \cdot \sum_{s=t}^{T} \eta_s^2 \sigma^2} = \frac{d}{B_T^2 L^2} \leq \frac{\rho^2}{48 L^2}.}
For $t \in [T-1]$, let $i$ be such that $T_i<t\leq T_{i+1}$.
Then we note that for $c= D/(\sqrt{2}L)$
$$\eta_t = c \frac{2^{-i}}{\sqrt{T}} \geq  c \frac{\lceil T 2^{-i} \rceil - 1 }{T \sqrt{T}} \geq c \frac{T-T_i-1}{T^{3/2}} \geq c \frac{T-t}{T^{3/2}}.$$
and therefore
 $$\sum_{s=t}^{T} \eta_s^2 \geq c^2 \sum_{s=t}^{T} \frac{(T-s)^2}{T^3} \geq \frac{c^2}{T^3} \cdot \frac{(T-t)^2(T-t+1)}{3} .$$
In addition, using the fact that for $t\leq T-1$, $i\leq \ell -1$ we have that
$$\eta_t  = 2 c \frac{2^{-i-1}}{\sqrt{T}} \leq 2 c \frac{\lceil T 2^{-i-1} \rceil }{T \sqrt{T}} = 2 c \frac{T-T_{i+1}}{T^{3/2}} \leq 2c \frac{T-t}{T^{3/2}} .$$
Thus
$$\frac{\eta_t^2}{B_t^2 \cdot \sum_{s=t}^{T} \eta_s^2 \sigma^2} =  \frac{1}{B_t^2 \sigma^2}\cdot \frac{4c^2  (T-t)^2}{T^3} \cdot \frac{3T^3}{c^2 (T-t)^2(T-t+1)} = \frac{12d}{B_t^2 L^2 (T-t+1)} \leq \frac{\rho^2}{4L^2}.$$
Plugging this and eq.~\eqref{eq:case-T} into eq.\eqref{eq:jnn-privacy-cond} we obtain that the privacy condition holds.

As in the proof of Theorem \ref{thm:jnn-privacy-utility}, we obtain that $$n = \sum_{t\in [T]} B_t  \leq T + \frac{8\sqrt{3}\sqrt{d T}}{\rho} $$
and thus $T \geq  \frac{n^2}{192d/\rho^2 + 4n}$. This means that
$$\frac{1}{\sqrt{T}} \leq \frac{2}{\sqrt{n}} + \frac{8\sqrt{3d}}{\rho n} ,$$
implying the claimed bound on utility in terms of $n$.
\end{proof}

As a corollary we get the proof of our main claim.
\begin{cor}[Thm.~\ref{thm:main-intro} restated]
\label{cor:jnn-privacy-utility}
Let $\K \subseteq \Re^d$ be a convex set of diameter $D$ and $\{f(\cdot,x)\}_{x\in \cX}$ be a family of convex $L$-Lipschitz and $(2D\sqrt{2T}/L)$-smooth functions over $\K$. For every $\rho >0$, there exists an algorithm $\A$ that given
a starting point $w_0\in \K$, and $S\in \cX^n$ returns a point $\hat w$. For all $\alpha \geq 1$, $\A$ satisfies $\left(\alpha, \alpha \cdot \rho^2/2 \right)$-RDP and uses $n$ evaluations of the gradient of $f(w,x)$.  Further, if $S$ consists of samples drawn i.i.d.~from a distribution $\cP$ over $\cX$, then
\[
\E[F(\hat w)] \leq F^* + O\left( D L \cdot \left(\frac{1}{\sqrt{n}} + \frac{\sqrt{d}}{\rho n} \right)\right),
\]
where, for all $w\in \K$, $F(w) \doteq \E_{x \sim \cP}[f(w,x)]$, $F^* \doteq \min_{w\in \K} F(w)$ and the expectation is taken over the random choice of $S$ and randomness of $\A$.
\end{cor}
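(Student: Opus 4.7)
The plan is to obtain Corollary~\ref{cor:jnn-privacy-utility} as an immediate specialization of Theorem~\ref{thm:jnn-privacy-utility}. Given $n$ and $\rho$, I would pick $T$ to be the largest positive integer for which the induced batch sizes $B_t = \lceil 4\sqrt{3d/(T-t+1)}/\rho \rceil$ satisfy $\sum_{t=1}^{T} B_t \leq n$. The counting bound $\sum_{t\in[T]} B_t \leq T + 8\sqrt{3dT}/\rho$ established inside the proof of Theorem~\ref{thm:jnn-privacy-utility} guarantees that such a $T$ can be taken with $T = \Omega(\min(n, \rho^2 n^2/d))$. Any samples not consumed by the scheduled batches are simply ignored, which does not affect privacy, and the algorithm performs exactly one gradient evaluation per consumed sample, yielding the claimed $n$ total gradient evaluations.

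The algorithm $\A$ is then defined to be $\mathrm{PNSGD}(S, w_0, \{B_t\}, \etajnn{D/(\sqrt{2}L)}, \{\sigma\})$ with $\sigma = L/\sqrt{d}$, returning $\hat w \doteq w_T$. The largest step size generated by the JNN schedule is $\eta_1 = D/(\sqrt{2}L\sqrt{T})$, and the smoothness hypothesis of the corollary is chosen precisely so that $\eta_1 \leq 2/\beta$, which is the remaining precondition required to invoke Theorem~\ref{thm:jnn-privacy-utility}. That theorem then supplies in one step both the $(\alpha, \alpha\rho^2/2)$-RDP guarantee for every $\alpha \geq 1$ and the excess-loss bound
$$\E[F(\hat w)] - F^* \leq 30\sqrt{2}\,DL\left(\frac{1}{\sqrt{n}} + \frac{4\sqrt{3d}}{\rho n}\right),$$
which is of the form required by the corollary.

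There is no real obstacle in this argument: all the technical content has already been absorbed into Theorems~\ref{thm:general-privacy}, \ref{thm:sco-last-jnn}, and \ref{thm:jnn-privacy-utility}. The only delicate point is the integer rounding when selecting $T$, ensuring that the resulting schedule consumes at most $n$ samples while still achieving $T = \Omega(\min(n, \rho^2 n^2/d))$ so that $1/\sqrt{T} = O(1/\sqrt{n} + \sqrt{d}/(\rho n))$; this is handled cleanly by the explicit $T + 8\sqrt{3dT}/\rho$ bound quoted above.
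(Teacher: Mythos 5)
Your proposal is correct and matches the paper's (implicit) argument exactly: the paper offers no separate proof of this corollary, treating it as an immediate specialization of Theorem~\ref{thm:jnn-privacy-utility}, which is precisely what you do, with the added (and welcome) care of choosing the largest $T$ whose batch schedule fits within $n$ samples and discarding the remainder. The only cosmetic remark is that the corollary's smoothness bound as printed, $2D\sqrt{2T}/L$, appears to have $D$ and $L$ transposed relative to the condition $\eta_1\leq 2/\beta$ (compare Theorem~\ref{thm:main-intro}); your reading of the hypothesis as ``whatever makes $\eta_1\leq 2/\beta$ hold'' is the intended one.
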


\newcommand{\lr}[1]{\mathopen{}\left(#1\right)}
\newcommand{\Lr}[1]{\mathopen{}\big(#1\big)}
\newcommand{\LR}[1]{\mathopen{}\Big(#1\Big)}
\newcommand{\lrnorm}[1]{\mathopen{}\left\|#1\right\|}

\newcommand{\set}[1]{\{#1\}}
\newcommand{\lrset}[1]{\mathopen{}\left\{#1\right\}}
\newcommand{\Lrset}[1]{\mathopen{}\big\{#1\big\}}
\newcommand{\LRset}[1]{\mathopen{}\Big\{#1\Big\}}

\newcommand{\ceil}[1]{\lceil #1 \rceil}
\newcommand{\lrceil}[1]{\mathopen{}\left\lceil #1 \right\rceil}
\newcommand{\Lrceil}[1]{\mathopen{}\big\lceil #1 \big\rceil}
\newcommand{\LRceil}[1]{\mathopen{}\Big\lceil #1 \Big\rceil}

\section{Localization-Based Algorithms}
\label{sec:localize}
In this section, we describe the Iterative Localization framework, and give two instantiations of it. Our localization algorithm will be based on adding Gaussian noise to an algorithm whose output has low $L_2$-sensitivity (also referred to as uniform stability of the parameter). We first briefly recall the relevant definitions and the resulting privacy guarantees.
\begin{defn}
A deterministic algorithm (or function) $\cA : \cX^n \to \reals^d$ has $L_2$-sensitivity of $\gamma$ if for all pairs of datasets $S,S'\in \cX^n$ that differ in a single element we have that $\norm{\cA(S)-\cA(S')}_2 \leq \gamma$.
\end{defn}

The well-known property of the Gaussian mechanism is that it can convert any algorithm with bounded $L_2$-sensitivity to a differentially private one.
\begin{lem} \label{lem:gauss-mech}
Let $\cA : \cX^n \to \reals^d$ be a deterministic function with $L_2$-sensitivity $\gamma$. Then for any $\rho > 0$, an algorithm that outputs $\cA(S) + \xi$ where $\xi \sim \mathcal{N}(0,\frac{\gamma^2}{\rho^2} \Id_d)$ satisfies $(\alpha, \tfrac{1}{2}\alpha \rho^2)$-RDP for all $\alpha \geq 1$.
\end{lem}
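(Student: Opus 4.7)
The plan is to reduce the claim directly to the closed-form expression for the R\'enyi divergence between two Gaussians of equal covariance, which the paper already records right after \cref{def:renyi-max}: for any $x \in \Re^d$ and $\sigma > 0$, $\Dalpha{\normal{0}{\sigma^2\Id_d}}{\normal{x}{\sigma^2\Id_d}} = \alpha \|x\|_2^2 / (2\sigma^2)$.

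First, I would fix an arbitrary pair of neighboring datasets $S, S' \in \cX^n$ and set $u \doteq \cA(S)$, $u' \doteq \cA(S')$, and $\sigma \doteq \gamma/\rho$. The distributions of the mechanism's output on $S$ and on $S'$ are then exactly $\normal{u}{\sigma^2 \Id_d}$ and $\normal{u'}{\sigma^2 \Id_d}$, since $\cA$ is deterministic and only Gaussian noise is added. By translation invariance of the Lebesgue measure, the density ratio $d\normal{u}{\sigma^2\Id_d}/d\normal{u'}{\sigma^2\Id_d}$ at the point $z$ equals the density ratio $d\normal{0}{\sigma^2\Id_d}/d\normal{u'-u}{\sigma^2\Id_d}$ at $z-u$, so the R\'enyi divergence is unchanged by a common mean shift.

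Next, I would apply the recalled Gaussian identity with the shift vector $x = u' - u$ to conclude
\[
\Dalpha{\normal{u}{\sigma^2\Id_d}}{\normal{u'}{\sigma^2\Id_d}} \;=\; \frac{\alpha \|u - u'\|_2^2}{2\sigma^2}.
\]
The $L_2$-sensitivity hypothesis gives $\|u - u'\|_2 = \|\cA(S) - \cA(S')\|_2 \leq \gamma$, and plugging in $\sigma^2 = \gamma^2/\rho^2$ yields an upper bound of $\alpha \rho^2/2$. Since $S, S'$ were an arbitrary neighboring pair, \cref{def:rDiffPrivacy} then gives $(\alpha, \tfrac{1}{2}\alpha\rho^2)$-RDP, and the bound holds uniformly over $\alpha \geq 1$.

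There is no real obstacle here: the work has been done in the Gaussian divergence identity already quoted in the preliminaries, and the argument is simply the reduction of neighboring outputs to a pure mean shift plus the sensitivity bound. The only subtlety worth stating explicitly is the translation-invariance step that lets us apply the zero-mean form of the identity to a pair of Gaussians neither of which is centered at the origin.
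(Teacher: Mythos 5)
Your proof is correct and is exactly the standard argument the paper implicitly relies on (the lemma is stated there without proof as a well-known property of the Gaussian mechanism): neighboring outputs are equal-covariance Gaussians whose means differ by at most $\gamma$ in $\ell_2$, and the closed-form divergence identity from the preliminaries gives the bound $\alpha\gamma^2/(2\sigma^2)=\alpha\rho^2/2$. The translation-invariance remark and the continuity extension to $\alpha=1$ are the only details worth mentioning, and you handle the former explicitly.
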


Suppose we have an algorithm $\cA$ that given a point $w \in B(w^\ast, D)$ and a sequence of samples from $F$, outputs a $w' \in B(w^\ast, D/4)$. We will want this $\cA$ to have small sensitivity, and small suboptimality, both of which scale, say, linearly with $D$. Given such an $\cA$, we can iteratively invoke it with geometrically decreasing $D$, adding noise at the end of each phase to ensure privacy. Crucially, once the diameter bound $D$ becomes small enough, the noise added is small enough that the suboptimality due the added noise is negligible. This would allow us to incur only a logarithmic overhead in terms of sample complexity, while ensuring privacy and good utility bounds.

We next describe two instantiations of this Iterative Localization framework. To get better bounds, we only bound the second moment of the distance $\norm{w'-w^\ast}$, instead of requiring that $\norm{w'-w^\ast}$ is uniformly bounded. The first instantiation uses SGD as algorithm $\cA$, and applies to convex functions. The sensitivity bound here comes from bounding the step sizes and holds under mild smoothness assumptions. The second instantiation will apply to arbitrary convex functions, and optimizes a regularized objective to ensure a sensitivity bound.

\subsection{SGD-Based Iterative Localization}
Our algorithm is based on a sequence of phases such that each phase (implicitly) localizes an approximate minimizer of the population loss. Specifically, given a point $w_i$ such that for some $w_i^\ast$, $\E[\|w_i-w_i^\ast\|_2^2] \leq D$, the algorithm outputs a point $w_{i+1}$ such that for some point $w_{i+1}^\ast$, $\E[\|w_{i+1}-w_{i+1}^\ast\|_2^2] \leq D/4$ and, in addition, $\E[F(w_{i+1}^\ast)] - \E[F(w_i^\ast)] \leq \tau/2^{i}$ where $\tau$ is the desired excess loss.

Our algorithm relies on the fact that SGD on sufficiently smooth loss functions has low $L_2$-sensitivity \cite{hardt2015train,feldman2019high}.
\begin{lem} \label{lem:hrs}
Each iterate of one-pass online projected gradient descent with fixed step size $\eta$ over a sequence of $\beta$-smooth $L$-Lipschitz convex functions has $L_2$-sensitivity of at most $2L\eta$ as long as $\eta \leq 2/\beta$. In particular, the same applies to the average of all the iterates.
\end{lem}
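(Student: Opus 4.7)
The plan is the standard coupling-and-induction argument of Hardt--Recht--Singer, adapted to our notation. Fix neighboring datasets $S$ and $S'$ differing only at coordinate $k$, couple the two runs of PSGD by initializing them at the same $w_0$ and using a shared step size $\eta$, and let $w_t, w'_t$ denote the two iterate sequences with $\Delta_t \doteq \|w_t - w'_t\|_2$. The goal is to show $\Delta_t \leq 2L\eta$ for every $t$.

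\emph{Three phases.} For $t < k$ both runs see identical loss functions, so $w_t = w'_t$ and $\Delta_t = 0$. At iteration $t = k$ we still have $w_{k-1} = w'_{k-1}$, but the two losses $f_k, f'_k$ differ; using non-expansiveness of the projection (\cref{prop:proj}) together with the fact that each $L$-Lipschitz loss has gradient of norm at most $L$, a one-line triangle inequality yields
\[
\Delta_k \;\leq\; \eta \,\bigl\|\nabla f_k(w_{k-1}) - \nabla f'_k(w_{k-1})\bigr\|_2 \;\leq\; 2L\eta.
\]
For every $t > k$ both runs apply the \emph{same} projected gradient step $w \mapsto \Pi_\K(w - \eta \nabla f_t(w))$, which is a composition of the contractive projection (\cref{prop:proj}) with the map $w \mapsto w - \eta \nabla f_t(w)$; the latter is contractive by \cref{prop:smooth-contract} precisely when $\eta \leq 2/\beta$. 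Composing two contractions is a contraction, so $\Delta_{t+1} \leq \Delta_t$, and induction preserves the bound $\Delta_t \leq 2L\eta$ for all $t \geq k$.

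\emph{Average iterate and obstacles.} The sensitivity bound transfers to the average $\bar w = \tfrac{1}{n}\sum_t w_t$ immediately via the triangle inequality: $\|\bar w - \bar w'\|_2 \leq \tfrac{1}{n}\sum_t \Delta_t \leq 2L\eta$. There is no genuine obstacle here; the only role of the smoothness assumption is to make the post-$k$ update non-expansive, and the crucial qualitative feature for the downstream localization argument is that the bound $2L\eta$ is independent of the number of iterations, so that the sensitivity does not degrade as more data are processed.
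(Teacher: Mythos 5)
Your argument is correct and is exactly the standard Hardt--Recht--Singer growth-recursion/coupling proof that the paper itself does not reproduce but simply cites (\cite{hardt2015train,feldman2019high}): zero divergence before index $k$, a single $2L\eta$ perturbation at step $k$ from the Lipschitz bound on the two gradients plus non-expansiveness of the projection, and contractivity of every subsequent update under $\eta \leq 2/\beta$ via \cref{prop:smooth-contract}, with the average handled by the triangle inequality. No gaps.
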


\begin{algorithm}[htb]
	\caption{Phased-SGD algorithm}
	\begin{algorithmic}[1]
		\REQUIRE Data set $S=\{x_1,\ldots,x_n\}$, convex $f\colon \K\times\cX\to\Re$, initial point $w_0\in\K$, step size $\eta$, privacy parameter $\rho$.
		\STATE set $k=\ceil{\log_2 n}$
		\FOR{$i=1,\ldots,k$}
		\STATE set $n_i = 2^{-i} n$ and $\eta_i = 4^{-i} \eta$.
        \STATE initialize an PSGD algorithm (over domain $\K$) at $w_{i-1}$ and run with step size $\eta_i$ for $n_i$ steps; let $\overline{w}_i$ be the average iterate.
        \STATE set $w_i = \overline{w}_i + \xi_i$, where $\xi_i \sim \cN(0,\sigma_i^2 \Id_d)$ with
        $
        	\sigma_i = 4L\eta_i\big/\rho.%
        $
		\ENDFOR
		\RETURN the final iterate $w_k$.
	\end{algorithmic}
	\label{alg:sgd-phased}
\end{algorithm}

\begin{thm} \label{thm:sgd-phased}
Assume that $\norm{w_0-w^\ast}_2 \leq D$ (this is the case, for example, when $\K$ has diameter at most $D$), and set
$$
	\eta
	=
	\frac{D}{L} \min\lrset{ \frac{4}{\sqrt{n}}, \frac{\rho}{\sqrt{d}} }
	.
$$
Then for the output of \cref{alg:sgd-phased}, we have
\begin{align*}
	\E[F(w_k)] - F(w^\ast)
	\leq
	10 LD \lr{ \frac{1}{\sqrt{n}} + \frac{\sqrt{d}}{\rho n} }
	,
\end{align*}
provided that $\eta \leq 2/\beta$.
\end{thm}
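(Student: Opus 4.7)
The plan is to analyze Phased-SGD phase-by-phase, combining the standard averaged-iterate regret bound for projected SGD with the effect of the Gaussian noise added at the end of each phase. Because $F$ is only assumed convex (not strongly convex), I do not expect the iterates to approach $w^\ast$ directly; instead, following the iterative-localization template sketched in the text before the theorem, I will track distance from $w_i$ to a sequence of (implicit) ``virtual targets'' $w_i^\ast$ whose function values $F(w_i^\ast)$ grow slowly phase over phase, while $\E\|w_i - w_i^\ast\|_2^2$ shrinks by a constant factor per phase.

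The concrete steps I would carry out are: (1) invoke the standard averaged-iterate regret bound for projected SGD with fixed step size $\eta_i$ on a convex $L$-Lipschitz objective, namely
$$
\E\bigl[F(\overline w_i) - F(w^\ast) \mid w_{i-1}\bigr] \leq \frac{\|w_{i-1} - w^\ast\|_2^2}{2\eta_i n_i} + \frac{\eta_i L^2}{2};
$$
(2) derive a parallel second-moment distance bound $\E\|\overline w_i - w^\ast\|_2^2 \leq \|w_{i-1} - w^\ast\|_2^2 + O(n_i \eta_i^2 L^2)$ by summing the one-step SGD distance recursion $\E\|w_t-w^\ast\|^2 \leq \|w_{t-1}-w^\ast\|^2 - 2\eta_i(F(w_{t-1})-F(w^\ast)) + \eta_i^2 L^2$ and applying Jensen to the average; (3) account for the noise addition $w_i = \overline w_i + \xi_i$, which adds $d\sigma_i^2$ to the expected squared distance; and (4) plug in the schedule $\eta_i = 4^{-i}\eta$, $n_i = 2^{-i}n$, $\sigma_i = 4L\eta_i/\rho$ with $\eta \leq (D/L)\min\{4/\sqrt n, \rho/\sqrt d\}$, verifying that the per-phase overhead $n_i\eta_i^2 L^2 + d\sigma_i^2$ is $O(D^2 \cdot 16^{-i})$---i.e.\ geometrically small.

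To conclude, I would bound the final excess loss by decomposing
$$
\E[F(w_k) - F(w^\ast)] \leq L\,\E\|\xi_k\|_2 + \E[F(\overline w_k) - F(w^\ast)] \leq L\sigma_k\sqrt d + \frac{\E\|w_{k-1}-w^\ast\|_2^2}{2\eta_k n_k} + \frac{\eta_k L^2}{2},
$$
substituting $k = \lceil \log_2 n\rceil$, and checking that each piece evaluates to $O(LD(1/\sqrt n + \sqrt d/(\rho n)))$. The hard part will be the middle term: a uniform upper bound $\E\|w_{k-1}-w^\ast\|_2^2 = O(D^2)$ is too weak, since $1/(2\eta_k n_k) = \Theta(8^k/(\eta n))$ blows up with $k$. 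Rescuing this is exactly what the iterative-localization viewpoint accomplishes: replacing $w^\ast$ by the virtual sequence $w_i^\ast$ and showing inductively that $\E\|w_i - w_i^\ast\|_2^2$ decays by a factor of four per phase while $\E[F(w_i^\ast) - F(w_{i-1}^\ast)]$ grows by at most $O(LD/(2^i \sqrt n))$ per phase. Balancing these two competing requirements---so that the SGD error budget, the Gaussian noise scale $d\sigma_i^2$, and the desired localization radius $D^2/4^i$ all match at every phase---is what forces the specific geometric schedule of $\eta_i$, $n_i$, and $\sigma_i$ used in the algorithm.
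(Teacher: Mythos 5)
Your plan has the same architecture as the paper's proof: the per\nobreakdash-phase averaged-iterate PSGD bound (the paper's \cref{lem:sgd-phase}), a telescoping sum over phases, and a final $L\cdot\E\norm{\xi_k}_2$ term from Lipschitzness. You also correctly diagnose the crux: comparing against the fixed $w^\ast$ in every phase fails because $1/(2\eta_k n_k)=\Theta(8^k/(\eta n))$ while $\E\norm{w_{k-1}-w^\ast}_2^2$ need not shrink. But you leave the resolution as an unspecified ``virtual sequence $w_i^\ast$,'' and that is precisely the one idea the paper supplies: take $w_i^\ast=\overline{w}_i$, the \emph{pre-noise} average iterate of phase $i$ (with $\overline{w}_0:=w^\ast$). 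Then $w_{i-1}-\overline{w}_{i-1}=\xi_{i-1}$ by construction, so the numerator in the phase-$i$ bound is $\E\norm{\xi_{i-1}}_2^2=d\sigma_{i-1}^2$ (or $\leq D^2$ for $i=1$), and the telescoping $\sum_i\E[F(\overline{w}_i)-F(\overline{w}_{i-1})]$ needs no separate induction on function values of the targets. Your step (2), the distance recursion toward $w^\ast$, is a dead end you yourself abandon and is not needed.

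One quantitative caveat: in your closing paragraph you ask that $\E\norm{w_i-w_i^\ast}_2^2$ decay ``by a factor of four per phase'' down to radius $D^2/4^i$. Taken literally this does not close the argument: the product $4^{-(i-1)}\cdot 8^i$ diverges. What the schedule actually delivers (and what your step (4) correctly computes) is $d\sigma_i^2=O(16^{-i}D^2)$, i.e.\ a factor of $16$ per phase for the squared distance, so that $\E\norm{\xi_{i-1}}_2^2/(2\eta_i n_i)=O(2^{-i}D^2/(\eta n))$ is geometrically summable; combined with $\sum_i\eta_iL^2/2=O(\eta L^2)$ and $L\E\norm{\xi_k}_2=O(4^{-k}LD)=O(LD/n^2)$, substituting $\eta=(D/L)\min\{4/\sqrt{n},\rho/\sqrt{d}\}$ gives the stated $10LD(1/\sqrt{n}+\sqrt{d}/(\rho n))$. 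With the choice $w_i^\ast=\overline{w}_i$ made explicit and the $16^{-i}$ rate used consistently, your outline becomes the paper's proof.
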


To prove the theorem, we first provide utility and privacy guarantees for each individual phase of the algorithm.

\begin{lem} \label{lem:sgd-phase}
Assume that $\eta_i \leq 2/\beta$.
Then for any $\alpha\geq 1$, the output $w_i$ of phase $i$ in \cref{alg:sgd-phased} %
satisfies $(\alpha,\alpha \rho^2/2)$-RDP,
and for any $w \in \K$,
\begin{align}
    \E[F(\overline{w}_i)] - F(w)
    \leq
    \frac{\E[\norm{w_{i-1}-w}_2^2]}{2\eta_i n_i} + \frac{\eta_i L^2}{2}
    .
\end{align}
\end{lem}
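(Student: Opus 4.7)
The plan is to handle the two claims independently, each by direct invocation of machinery already introduced in the paper.

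For the privacy claim, my approach is a straightforward reduction to the Gaussian mechanism. By \cref{lem:hrs}, under the assumption $\eta_i \leq 2/\beta$, every individual iterate of the one-pass PSGD used inside phase $i$ has $L_2$-sensitivity at most $2L\eta_i$. The triangle inequality then transfers the same bound to the average $\overline{w}_i = \frac{1}{n_i}\sum_{t=0}^{n_i-1} v_t$, so $\overline{w}_i$, viewed as a deterministic function of the phase's data, has $L_2$-sensitivity $\gamma \leq 2L\eta_i$. Adding the Gaussian noise $\xi_i \sim \cN(0,\sigma_i^2 \Id_d)$ with $\sigma_i = 4L\eta_i/\rho$ and plugging $\gamma$ and $\sigma_i$ into \cref{lem:gauss-mech} gives $(\alpha, \tfrac{\alpha\gamma^2}{2\sigma_i^2})$-RDP $\leq (\alpha, \alpha\rho^2/8)$-RDP for every $\alpha \geq 1$, which is in particular the claimed $(\alpha,\alpha\rho^2/2)$-RDP.

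For the utility claim, the plan is the textbook one-pass SGD analysis, applied phase-wise. Let $v_0 = w_{i-1}, v_1, \ldots, v_{n_i}$ denote the iterates produced in phase $i$, let $g_t$ be the stochastic gradient of $f(\cdot, x)$ (for a fresh sample $x \sim \cP$) evaluated at $v_t$, and let $\overline{w}_i = \frac{1}{n_i}\sum_{t=0}^{n_i-1} v_t$. Lipschitzness of $f(\cdot, x)$ gives $\|g_t\|_2 \leq L$, and since the $n_i$ samples consumed in phase $i$ are i.i.d.\ from $\cP$, $\E[g_t \mid v_t] = \nabla F(v_t)$. Using the nonexpansiveness of $\Pi_\K$ and then taking conditional expectation given $v_t$ together with convexity of $F$, I would derive the one-step inequality
\begin{equation*}
2\eta_i \left(F(v_t) - F(w)\right) \leq \|v_t - w\|_2^2 - \E\!\left[\|v_{t+1} - w\|_2^2 \,\middle|\, v_t\right] + \eta_i^2 L^2 .
\end{equation*}
Summing $t=0,\dots,n_i-1$, taking a full expectation so the squared-distance terms telescope, dividing by $2\eta_i n_i$, and invoking Jensen's inequality ($F(\overline{w}_i) \leq \frac{1}{n_i}\sum_t F(v_t)$) delivers exactly the stated bound, with $v_0 = w_{i-1}$.

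I do not foresee any real obstacle: both parts are essentially one-shot applications of prior tools. The only minor sanity checks I would perform are that the per-iterate sensitivity truly passes through the averaging (a one-line triangle-inequality argument) and that the $n_i$ samples consumed in phase $i$ are a fresh i.i.d.\ block of the dataset so that $g_t$ is an unbiased stochastic gradient of $F$; both follow immediately from the description of \cref{alg:sgd-phased}.
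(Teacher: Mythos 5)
Your proposal is correct and follows essentially the same route as the paper: the privacy claim via the $2L\eta_i$ sensitivity bound of \cref{lem:hrs} combined with the Gaussian mechanism (\cref{lem:gauss-mech}), and the utility claim via the standard one-pass PSGD convergence bound (which the paper simply cites rather than rederives). Your explicit check that $\sigma_i = 4L\eta_i/\rho$ actually yields the stronger $(\alpha,\alpha\rho^2/8)$-RDP is a correct and harmless refinement.
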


\begin{proof}
The privacy guarantee follows from \cref{lem:gauss-mech} together with the fact that PSGD (when viewed as a deterministic mapping from a data set to a final iterate) with step size $\eta_i \leq 2/\beta$ has $L_2$-sensitivity bounded by $2L\eta_i$ (this is a consequence of \cref{lem:hrs}).
The utility guarantee follows from standard convergence bounds for PSGD (e.g., Lemma 7 of \cite{hazan2014beyond}).
\end{proof}
We can now prove \cref{thm:sgd-phased}.

\begin{proof}[Proof of \cref{thm:sgd-phased}]
Denote $\overline{w}_0 = w^\ast$ and $\xi_0 = w_0-w^\ast$; by assumption, $\norm{\xi_0}_2 \leq D$.
Using \cref{lem:sgd-phase}, the total error of the algorithm can be bounded by
\begin{align*}
	\E[F(w_k)] - F(w^\ast)
	&=
	\sum_{i=1}^k \E[F(\overline{w}_i) - F(\overline{w}_{i-1})]
	+ \E[F(w_k) - F(\overline{w}_k)]
	\\
	&\leq
	\sum_{i=1}^k \lr{ \frac{\E[\norm{\xi_{i-1}}_2^2]}{2\eta_i n_i}
	+ \frac{\eta_i L^2}{2} } + L \cdot \E[\norm{\xi_k}_2]
	.
\end{align*}
Recall that by definition $\eta \leq (D/L)\cdot(\rho/\sqrt{d})$, so that for all $i \geq 0$,
\begin{align*}
    \E[\norm{\xi_i}_2^2]
    =
    d\sigma_i^2
    =
    d(4^{-i}  L \eta/\rho)^2
    \leq
    (4^{-i} D)^2
    .
\end{align*}
In particular, we have $\E[\norm{\xi_k}_2] \leq \sqrt{\E[\norm{\xi_k}_2^2]} = 4^{-k}D$.
Hence,
\begin{align*}
	\E[F(w_k)] - F(w^\ast)
	&\leq
	\sum_{i=1}^k 2^{-i} \lr{ \frac{8D^2}{\eta n} + \frac{\eta L^2}{2} }
	+ 4^{-k} LD
	\\
	&\leq
	\sum_{i=1}^\infty 2^{-i} LD \lr{ \frac{8}{n} \max\lrset{\sqrt{n},\frac{\sqrt{d}}{\rho}} + \frac{1}{2\sqrt{n}} }
	+ \frac{LD}{n^2}
	\\
	&\leq
	9LD \lr{ \frac{1}{\sqrt{n}} + \frac{\sqrt{d}}{\rho n} } + \frac{LD}{n^2}
	.&&\qedhere
\end{align*}
\end{proof}

\subsection{Non-Smooth DP-SCO: Phased ERM}

In this section, we demonstrate that the general approach based on localization can also be applied to the non-smooth case. We only require $f(\cdot,x)$ to be convex and $L$-Lipschitz for any $x \in \cX$. Our algorithm is similar to the one in the previous section, except that we replace the PSGD subroutine in step 3 of the algorithm with a regularized ERM computation. The $L_2$ regularization in this case is the standard technique for ensuring low sensitivity that we require. In addition, low-sensitivity ensures uniform stability and thus generalization of the solution to the population. To get a more efficient algorithm, we use an approximate optimizer instead of an exact one. The suboptimality of this optimization should be small enough that the sensitivity of the resulting algorithm can still be controlled. To solve the regularized problem, we employ SGD that ensures the suboptimality bound, and hence the sensitivity bound, with high probability. To allow for a small failure probability of this approach, we will only give $(\eps,\delta)$-DP guarantees for the algorithm.
We will use the following standard variant of \cref{lem:gauss-mech}:
\begin{lem}
\label{lem:gauss-mech-whp}
Let $\cA : \cX^n \to \reals^d$ be a randomized function such that for all pairs of datasets $S,S'\in \cX^n$ that differ in a single element there is a coupling such that  $\norm{\cA(S)-\cA(S')}_2 \leq \gamma$ except with probability $\delta$.  Then for any $\rho > 0$, an algorithm that outputs $\cA(S) + \xi$ where $\xi \sim \mathcal{N}(0,\frac{\gamma^2}{\eps^2} \ln\frac 1 \delta \, \Id_d)$ satisfies $(\eps, 2\delta)$-DP.
\end{lem}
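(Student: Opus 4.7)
The plan is a standard ``coupling + Gaussian mechanism'' argument. Fix neighboring datasets $S, S' \in \cX^n$ and let $(Y, Y')$ be the coupling of $\cA(S), \cA(S')$ guaranteed by the hypothesis, so that the event $E \doteq \{\|Y - Y'\|_2 \leq \gamma\}$ has probability at least $1 - \delta$. Let $M(S) = Y + \xi$ and $M(S') = Y' + \xi$, where $\xi \sim \cN(0, \frac{\gamma^2}{\eps^2}\ln\frac{1}{\delta}\, \Id_d)$ is drawn independently of $(Y, Y')$; by construction the marginals of $M(S)$ and $M(S')$ agree with $\cA(S) + \xi$ and $\cA(S') + \xi$, respectively.

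For any measurable event $\cO$, I would first split off the bad coupling event:
\[
\Pr[M(S) \in \cO] \leq \Pr[M(S) \in \cO,\, E] + \Pr[\neg E] \leq \Pr[M(S) \in \cO,\, E] + \delta.
\]
On the event $E$, conditioned on any realization of $(Y, Y')$, the random outputs $M(S)$ and $M(S')$ are two Gaussians of common covariance $\frac{\gamma^2}{\eps^2}\ln\frac{1}{\delta}\, \Id_d$ whose means differ by at most $\gamma$ in $\ell_2$. The classical Gaussian mechanism then yields, pointwise on $E$,
\[
\Pr[Y + \xi \in \cO \mid Y, Y'] \leq e^{\eps}\, \Pr[Y' + \xi \in \cO \mid Y, Y'] + \delta.
\]
(This is the statement derivable from \cref{lem:gauss-mech} combined with \cref{lem:rdp_to_dp} by optimizing over $\alpha$; I would either invoke it directly or rederive it from the RDP-to-DP conversion already stated in the paper.) Taking expectations over the coupling restricted to $E$ and dropping the indicator on the right gives
\[
\Pr[M(S) \in \cO,\, E] \leq e^{\eps} \Pr[M(S') \in \cO] + \delta.
\]
Combining with the previous display,
\[
\Pr[M(S) \in \cO] \leq e^{\eps}\, \Pr[M(S') \in \cO] + 2\delta,
\]
which is precisely the desired $(\eps, 2\delta)$-DP guarantee.

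There are no real obstacles; the only subtlety is bookkeeping the two sources of the failure probability $\delta$ (one from the coupling, one from the Gaussian mechanism itself) and making sure the coupling is used to justify the pointwise application of the Gaussian mechanism. If one prefers to avoid reappealing to the Gaussian mechanism in its $(\eps,\delta)$-DP form, an alternative would be to carry out the analysis in Rényi divergence on the good event (using \cref{lem:gauss-mech}) and then convert to $(\eps, \delta)$-DP at the end via \cref{lem:rdp_to_dp}; the constants hidden in the noise variance would need to be tracked, but the structure of the argument is identical.
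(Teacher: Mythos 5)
The paper gives no proof of \cref{lem:gauss-mech-whp} at all --- it is stated as a ``standard variant'' of \cref{lem:gauss-mech} --- so your write-up fills in exactly the argument the authors leave implicit, and its structure (condition on the coupling's good event $E$, apply the Gaussian mechanism pointwise given $(Y,Y')$, pay $\delta$ for $\neg E$ and another $\delta$ from the mechanism) is the right one; the bookkeeping $\Pr[M(S)\in\cO]\leq \Pr[M(S)\in\cO, E]+\delta \leq e^{\eps}\Pr[M(S')\in\cO]+2\delta$ is correct. One caveat worth flagging: with the variance $\tfrac{\gamma^2}{\eps^2}\ln\tfrac{1}{\delta}$ as written, the pointwise Gaussian-mechanism step does not literally yield $(\eps,\delta)$-DP --- the classical statement needs $\sigma\geq \tfrac{\gamma}{\eps}\sqrt{2\ln(1.25/\delta)}$, and your suggested alternative via \cref{lem:gauss-mech} and \cref{lem:rdp_to_dp} gives $\eps' = \tfrac{\eps^2}{2\ln(1/\delta)}+\sqrt{2}\,\eps$, so either route loses a factor of about $\sqrt{2}$ in $\eps$ (equivalently, a factor of $2$ in the variance). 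This slack is in the lemma as stated, not in your argument, and it is absorbed by the $O(\cdot)$ in \cref{thm:erm-phased}; a fully rigorous version would either double the variance in the algorithm or state the conclusion as $(c\eps,2\delta)$-DP for an absolute constant $c$.
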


\begin{algorithm}[htb]
	\caption{Phased-ERM algorithm}
	\begin{algorithmic}[1]
		\REQUIRE Data set $S=\{x_1,\ldots,x_n\}$, convex $f\colon \K\times\cX\to\Re$, initial point $w_0\in\K$, step size $\eta$, privacy parameters $\eps, \delta$.
		\STATE set $k=\ceil{\log_2 n}$
		\FOR{$i=1,\ldots,k$}
		\STATE set $n_i = 2^{-i} n$ and $\eta_i = 4^{-i} \eta$.
        \STATE compute $\tilde w_i \in \K$ such that $F_i(\tilde w_i)-\argmin_{w \in \K} F_i(w) \leq L^2\eta_i/n_i$ with prob. $(1-\delta)$
				for
        \begin{align*}
          F_i(w)
					=
					\frac{1}{n_i} \sum_{t=1}^{n_i} f(w,x_t) + \frac{1}{\eta_i n_i} \norm{w-w_{i-1}}_2^2
          .
        \end{align*}
        \STATE set $w_i = \tilde{w}_i + \xi_i$, where $\xi_i \sim \cN(0,\sigma_i^2 \Id_d)$ with
        $
        	\sigma_i = 4L(\eta_i/\eps) \sqrt{\ln(1/\delta)}.
        $
		\ENDFOR
		\RETURN the final iterate $w_k$.
	\end{algorithmic}
	\label{alg:erm-phased}
\end{algorithm}

We first prove the relevant properties of the regularized ERM algorithm.

\begin{lem} \label{lem:erm-phase}
The output $w_i$ of phase $i$ of \cref{alg:erm-phased} satisfies $(\eps, 2\delta)$-DP, and for any $w \in \K$,
\begin{align}
  \E[F(\tilde w_i)] - F(w)
  \leq
  \frac{\E[\norm{w_{i-1}-w}_2^2]}{\eta_i n_i} + 3\eta_i L^2
  .
\end{align}
Further, $\tilde w_i$ can be found using $O(n_i^2 \log(1/\delta))$ gradient computations on $f$.
\end{lem}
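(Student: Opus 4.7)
The plan is to establish the three conclusions of the lemma in turn, leveraging a common high-probability bound on the $L_2$-sensitivity of $\tilde w_i$.

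For privacy, observe that $F_i$ is $\lambda$-strongly convex with $\lambda = 2/(\eta_i n_i)$ thanks to the quadratic regularizer, while each summand of $\hat F_i$ is $L$-Lipschitz. The standard two-way strong-convexity argument applied to the exact minimizers of $F_i$ on neighboring datasets $S$ and $S'$ gives parameter sensitivity at most $2L/(\lambda n_i) = L\eta_i$. Strong convexity also turns the stated $L^2\eta_i/n_i$-suboptimality of $\tilde w_i$ into $\|\tilde w_i - w_i^\ast\|_2 \leq \sqrt{2(L^2\eta_i/n_i)/\lambda} = L\eta_i$ on the event that the inner optimizer succeeds. Coupling the optimizer's randomness across $S$ and $S'$ and union-bounding over the two $\delta$-failure events produces, with probability at least $1-2\delta$, the bound $\|\tilde w_i(S)-\tilde w_i(S')\|_2 \leq 3L\eta_i \leq 4L\eta_i$. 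Applying \cref{lem:gauss-mech-whp} with $\gamma=4L\eta_i$ and the prescribed $\sigma_i$ yields $(\eps,2\delta)$-DP; the slack between $3L\eta_i$ and $4L\eta_i$ absorbs the constants coming from the union bound.

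For utility, I decompose $\E[F(\tilde w_i)] - F(w)$ via the exact regularized-ERM minimizer $w_i^\ast$. Optimality of $w_i^\ast$ in $F_i$, followed by dropping the nonnegative regularizer term, gives $\hat F_i(w_i^\ast) - \hat F_i(w) \leq \|w-w_{i-1}\|_2^2/(\eta_i n_i)$; taking expectations over the batch used in phase $i$, which is disjoint from, and hence independent of, the samples defining $w_{i-1}$, replaces $\E[\hat F_i(w)]$ by $F(w)$ and yields
\begin{align*}
\E[\hat F_i(w_i^\ast)] - F(w) \leq \frac{\E[\|w-w_{i-1}\|_2^2]}{\eta_i n_i}.
\end{align*}
The parameter sensitivity $L\eta_i$ of $w_i^\ast$ combined with $L$-Lipschitzness of the loss gives uniform stability $L^2\eta_i$, so the Bousquet--Elisseeff stability-implies-generalization lemma yields $\E[F(w_i^\ast) - \hat F_i(w_i^\ast)] \leq L^2\eta_i$. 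Finally, on the success event $\|\tilde w_i - w_i^\ast\|_2 \leq L\eta_i$ together with $L$-Lipschitzness gives $F(\tilde w_i) - F(w_i^\ast) \leq L^2\eta_i$. Summing the three contributions, and absorbing the $O(\delta)$ correction from the failure event via the trivial Lipschitz loss bound on $\K$, yields the claimed inequality $\E[F(\tilde w_i)] - F(w) \leq \E[\|w-w_{i-1}\|_2^2]/(\eta_i n_i) + 3 L^2 \eta_i$.

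The main technical subtlety is threading the $\delta$-failure event of the inner optimizer through both the privacy and utility accounting: \cref{lem:gauss-mech-whp} is designed for exactly this on the privacy side, while on the utility side the bad event contributes only a polynomially small additive term that is dominated by the other contributions. The gradient-complexity claim follows by running SGD on the $\lambda$-strongly convex $F_i$ with the unbiased stochastic gradient oracle $\nabla_w f(w,x_j) + (2/(\eta_i n_i))(w-w_{i-1})$, whose second moment is controlled by $L^2$ together with the diameter of $\K$. Standard high-probability convergence bounds for SGD on $\lambda$-strongly convex, Lipschitz objectives (via tail or suffix averaging) achieve suboptimality $\alpha$ in $O(L^2\log(1/\delta)/(\lambda\alpha))$ iterations; substituting $\lambda=2/(\eta_i n_i)$ and $\alpha=L^2\eta_i/n_i$ yields $O(n_i^2 \log(1/\delta))$ gradient evaluations of $f$.
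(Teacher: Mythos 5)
Your proof is correct and follows essentially the same route as the paper's: decompose through the exact regularized-ERM minimizer, bound its parameter sensitivity via strong convexity to get both the privacy (through \cref{lem:gauss-mech-whp}) and the stability-based generalization term, use strong convexity again to show the approximate minimizer is within $L\eta_i$ of the exact one, and bound the running time by SGD on the strongly convex objective. The only difference is presentational — you unpack the cited bound from \cite{SSSS} into its regularizer-bias and stability components (with a slightly tighter sensitivity constant $2L/(\lambda_i n_i)$ versus the paper's $4L/(\lambda_i n_i)$), and you are somewhat more explicit than the paper about threading the optimizer's $\delta$-failure event through both the privacy coupling and the utility accounting.
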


\begin{proof}
The objective $F_i$ minimized in phase $i$ is $L$-Lipschitz and $\lambda_i$-strongly convex for $\lambda_i = 2/(\eta_i n_i)$; denote by $\overline{w}_i \in \K$ its minimizer.
From the results in \cite{BousquetElisseeff-2002,SSSS} we know that the minimizer $\overline{w}_i$ has $L_2$ sensitivity bounded by $4L/(\lambda_i n_i) = 2L\eta_i$, and furthermore,
\begin{align*}
  \E[F(\overline{w}_i)] - F(w)
  \leq
  \frac{\lambda_i}{2} \E[\norm{w_{i-1}-w}_2^2] + \frac{4L^2}{\lambda_i n_i}
  =
  \frac{\E[\norm{w_{i-1}-w}_2^2]}{\eta_i n_i} + 2L^2\eta_i
  .
\end{align*}
(This is a slight modification of Theorem 7 in \cite{SSSS}.)
For the approximate minimizer $\tilde w_i$, we have by strong convexity that except with probability $\delta$
\begin{align*}
	\frac{\lambda_i}{2} \norm{\tilde{w}_i - \overline{w}_i}^2
	\leq
	F_i(\tilde{w}_i) - F_i(\overline{w}_i)
	\leq
	\frac{L^2 \eta_i}{n_i}
	,
\end{align*}
which implies that $\norm{\tilde{w}_i - \overline{w}_i} \leq L\eta_i$.
In particular, $\tilde w_i$ has sensitivity of at most $4L\eta_i$,
which gives the privacy guarantee via \cref{lem:gauss-mech-whp}.
Finally, for any $w \in \K$ we have
\begin{align*}
	\E[F(\tilde w_i)] - F(w)
	=
	\E[F(\overline w_i) - F(w)] + \E[F(\tilde w_i) - F(\overline w_i)]
	\leq
	\frac{\E[\norm{w_{i-1}-w}_2^2]}{\eta_i n_i} + 3L^2\eta_i
	,
\end{align*}
which implies the claim on utility.
Finally, to obtain the running time statement, we use the fact that for optimizing an $L$-Lipschitz and $\lambda$-strongly convex function to within $\Delta$ accuracy with probability $\geq 1-\delta$ using SGD, one needs $O((L^2/\lambda\Delta) \log(1/\delta))$ stochastic gradient computations (e.g., \cite{HarveyLPR19}).
Hence, the number of gradient calls needed for computing $\tilde w_i$, being an $\Delta_i$-approximate minimizer of a $\lambda_i$-strongly convex function for $\Delta_i = L^2\eta_i/n_i$ and $\lambda_i = 1/(\eta_i n_i)$, is
$
	O((L^2/\lambda_i \Delta_i) \log(1/\delta))
	=
	O(n_i^2 \log(1/\delta))
	.
$
\end{proof}

The proof of the following result is identical to that \cref{thm:sgd-phased}, with \cref{lem:erm-phase} replacing \cref{lem:sgd-phase}.

\begin{thm} \label{thm:erm-phased}
Assume that $\norm{w_0-w^\ast}_2 \leq D$, and set
$$
	\eta
	=
	\frac{D}{L} \min\lrset{ \frac{4}{\sqrt{n}}, \frac{\eps}{\sqrt{d\ln(1/\delta)}} }
	.
$$
Then for the output of \cref{alg:sgd-phased}, we have
\begin{align*}
	\E[F(w_k)] - F(w^\ast)
	=
  O\lr{LD \lr{ \frac{1}{\sqrt{n}} + \frac{\sqrt{d\ln(1/\delta)}}{\eps n}}}
	.
\end{align*}
Further, a version of this algorithm can be implemented with $O(n^2\sqrt{\ln(1/\delta)})$ stochastic gradient computations.
\end{thm}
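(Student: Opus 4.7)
The plan is to mirror the proof of Theorem \ref{thm:sgd-phased} nearly verbatim, substituting Lemma \ref{lem:erm-phase} for Lemma \ref{lem:sgd-phase}. \textbf{Privacy} is immediate from parallel composition: the algorithm partitions the dataset into $k = \lceil \log_2 n \rceil$ disjoint batches of sizes $n_i = 2^{-i} n$ (one per phase), and by Lemma \ref{lem:erm-phase} each phase is $(\eps, 2\delta)$-DP, so the overall algorithm inherits $(\eps, O(\delta \log n))$-DP (the extra $\log n$ only in $\delta$ accounts for the union of failure events across phases and can be absorbed by redefining $\delta$).

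For \textbf{utility}, I will set $\tilde w_0 = w^*$, let $\xi_i = w_i - \tilde w_i$ denote the Gaussian perturbation at phase $i$ (with $\xi_0 = w_0 - w^*$ satisfying $\|\xi_0\|_2 \le D$), and telescope:
\[
\E[F(w_k) - F(w^*)] = \sum_{i=1}^k \E[F(\tilde w_i) - F(\tilde w_{i-1})] + \E[F(w_k) - F(\tilde w_k)].
\]
Applying Lemma \ref{lem:erm-phase} with $w = \tilde w_{i-1}$ bounds the $i$th telescoping term by $\E[\|\xi_{i-1}\|_2^2]/(\eta_i n_i) + 3\eta_i L^2$, while $L$-Lipschitz continuity controls the last term by $L \cdot \E[\|\xi_k\|_2]$. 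The key calculation is for the noise variance: $\E[\|\xi_i\|_2^2] = d\sigma_i^2 = 16 d L^2 \eta_i^2 \ln(1/\delta)/\eps^2$, and the choice $\eta \le (D/L)(\eps/\sqrt{d \ln(1/\delta)})$ makes this at most $(4 \cdot 4^{-i} D)^2$, so the noise shrinks geometrically just as in the smooth case. Plugging in, using $\eta_i n_i = 2^{-i} \eta n$ and summing the geometric series, gives the two-term bound $O(LD(1/\sqrt{n} + \sqrt{d \ln(1/\delta)}/(\eps n)))$ after choosing $\eta$ to balance the two regimes.

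For the \textbf{running time} claim, the gradient calls per phase are $O(n_i^2 \log(1/\delta))$ by Lemma \ref{lem:erm-phase}, and summing via $\sum_i n_i^2 = n^2 \sum_i 4^{-i} = O(n^2)$ gives $O(n^2 \log(1/\delta))$ calls in total. Obtaining the sharper $\sqrt{\ln(1/\delta)}$ factor stated in the theorem requires a more refined high-probability SGD bound for the inner strongly convex subproblem (for instance, running a constant-failure-probability SGD and boosting confidence via a small number of independent repetitions, or invoking a variance-reduced solver); the plan is to cite such a refinement rather than reprove it.

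The step-by-step algebra is self-similar across phases and should be routine. The only real subtlety I anticipate is handling the $\delta$ failure probability of the approximate optimizer inside an \emph{expectation}: on the failure event one only has $F(\tilde w_i) - F(w) = O(LD)$ by Lipschitz continuity over a bounded domain, so the failure contribution to $\E[F(\tilde w_i)]$ is $O(LD\delta)$ per phase and can be folded into the utility bound by taking $\delta$ polynomially small. Once that bookkeeping is done, the proof is a clean transliteration of Theorem \ref{thm:sgd-phased}.
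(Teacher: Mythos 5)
Your proposal is correct and follows exactly the paper's route: the paper's entire proof of this theorem is the single remark that it is identical to the proof of Theorem~\ref{thm:sgd-phased} with Lemma~\ref{lem:erm-phase} substituted for Lemma~\ref{lem:sgd-phase}, which is precisely the transliteration you carry out (and your variance bookkeeping $d\sigma_i^2 = 16dL^2\eta_i^2\ln(1/\delta)/\eps^2 \leq (4\cdot 4^{-i}D)^2$ is the right analogue). Your observation that summing the per-phase cost from Lemma~\ref{lem:erm-phase} yields $O(n^2\log(1/\delta))$ rather than the stated $O(n^2\sqrt{\ln(1/\delta)})$ is a fair catch of a discrepancy internal to the paper, not a gap in your argument.
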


\section{The Strongly Convex Case}

Suppose that the population loss of interest $F$ is $\lambda$-strongly convex and $L$-Lipschitz over the domain $\K$.
In this case, the optimal statistical rate is $O(L^2 \big/ \lambda n)$~\citep{hazan2014beyond}, and the private ERM can be optimized with an error of $\wt{O}(dL^2 \big/ \eps^2\lambda n^2)$. The best known bound for Private Stochastic Convex Optimization for this case is due to~\cite{bassily2014differentially} who give an upper bound of $\wt{O}(L^2\sqrt{d} \big/ \lambda \eps n)$. As in the convex case, we show that the optimal rate is in fact the larger of the two lower bounds, and is attained by a linear-time algorithm.

We first show that an asymtotically-optimal algorithm and linear-time for this case can be obtained via a folklore reduction to the convex case (see, e.g., \cite{hazan2014beyond} for a similar instantiation of this reduction). We then give two new algorithms for the strongly convex case: one based on the iterative localization and the other based on privacy amplification by iteration. The algorithms are simpler and require weaker assumption on the condition number than the reduction-based approach. Both of the new algorithms rely on a new analysis of SGD with fixed step-size in the strongly convex case.

\subsection{Reduction to the Convex Case}
Assume a private stochastic (non-strongly) convex optimization algorithm $\cA$ with the following utility guarantee when initialized at $w_0 \in \K$:
\begin{align*}
    \E[F(w_\cA)] - F(w^\ast)
	 \leq
	 cLD \lr{\frac{1}{\sqrt{n}} + \frac{\sqrt{d}}{\rho n}}
\end{align*}
for some universal constant $c \geq 1$, where $D > 0$ is such that $\norm{w_0-w^\ast}_2 \leq D$.
(E.g., this can be one of \cref{alg:ogd,alg:sgd-phased,alg:erm-phased} under their respective assumptions and settings of $\rho$.)
Consider the following algorithm: starting from a given $w_0 \in \K$, repeat the private optimization algorithm $\cA$ for $k=\ceil{\log\log{n}}$ times, where run $i=1,\ldots,k$ is initialized at the output of the previous phase and is run for $n_i = 2^{i-2} n/\log{n}$ iterations.
We prove the following:
\begin{thm} \label{thm:strongly}
The algorithm described above is private (with the same privacy parameters as of $\cA$), and using no more than $n$ samples outputs a solution whose expected population loss is at most
$$
    O\lr{\frac{L^2}{\lambda} \lr{\frac 1{n} + \frac{d}{\rho^2 n^2}}}
    .
$$
\end{thm}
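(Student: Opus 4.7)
The plan is to show that the expected excess loss shrinks by roughly a factor of $2$ per phase, by using strong convexity to convert loss bounds into initial-distance bounds for the next invocation of $\cA$. I will execute this in three steps: privacy, setting up the recursion, and analyzing it.

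\emph{Step 1 (Privacy).} Partition $S$ so that phase $i$ consumes a fresh block of $n_i$ samples. The total is $\sum_{i=1}^k n_i = (n/\log n)\sum_{i=1}^k 2^{i-2} \leq n/2$ since $2^k \leq 2\log n$. Changing one sample of $S$ thus affects exactly one phase $j$. All subsequent iterates are deterministic post-processings of $w_j$ together with fresh independent samples and Gaussian noise, so by the post-processing property of RDP the joint output $(w_1,\ldots,w_k)$, and hence $w_k$, inherits the same $(\alpha,\alpha\rho^2/2)$-RDP bound as a single invocation of $\cA$.

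\emph{Step 2 (Recursion via strong convexity).} Let $\Delta_i:=\E[F(w_i)-F(w^\ast)]$ and $D_i:=\sqrt{\E[\|w_i-w^\ast\|_2^2]}$. Strong convexity of $F$ gives $D_i^2\leq 2\Delta_i/\lambda$, and combined with Lipschitzness it also gives the deterministic bound $\|w-w^\ast\|_2\leq 2L/\lambda$ for any $w\in\K$, so $D_0\leq 2L/\lambda$. In phase $i$, we invoke $\cA$ from $w_{i-1}$ with the distance bound implied by the previous analysis; the hypothesis on $\cA$ yields
\begin{equation*}
\Delta_i \;\leq\; cL\,D_{i-1}\,r_i, \qquad r_i\;:=\;\frac{1}{\sqrt{n_i}}+\frac{\sqrt{d}}{\rho\,n_i}.
\end{equation*}
Plugging $D_{i-1}\leq\sqrt{2\Delta_{i-1}/\lambda}$ gives the key square-root recursion
\begin{equation*}
\Delta_i \;\leq\; c\sqrt{2/\lambda}\;L\,r_i\,\sqrt{\Delta_{i-1}}.
\end{equation*}
(A minor technicality: $\cA$ is stated with a deterministic diameter bound; in every phase the universal bound $2L/\lambda$ is available, and the SGD-style analyses of $\cA$ from earlier in the paper in fact extend to an expected-distance input, so the recursion above is legitimate.)

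\emph{Step 3 (Double-exponential contraction).} Normalize by the target rate $\Delta_i^\star:=L^2 r_i^2/\lambda$ and set $u_i:=\Delta_i/\Delta_i^\star$. The recursion becomes
\begin{equation*}
u_i \;\leq\; \frac{c\sqrt{2}\,r_{i-1}}{r_i}\sqrt{u_{i-1}} \;\leq\; 2\sqrt{2}\,c\,\sqrt{u_{i-1}},
\end{equation*}
using $r_{i-1}/r_i\leq 2$ (since $n_i=2n_{i-1}$ scales the first term of $r_i$ by $\sqrt{2}$ and the second by $2$). Unrolling this contraction, $u_k\leq C\cdot u_1^{1/2^{k-1}}$ for a universal constant $C$. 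The base case gives $\Delta_1\leq cL\cdot(2L/\lambda)\cdot r_1$, hence $u_1\leq 2c/r_1=O(\sqrt{n/\log n})$. With $k=\lceil\log_2\log_2 n\rceil$, the exponent $1/2^{k-1}\leq 2/\log n$, so $u_1^{1/2^{k-1}}\leq \exp(O(1))=O(1)$. Therefore $\Delta_k=O(\Delta_k^\star)=O(L^2/(\lambda n)+L^2 d/(\lambda\rho^2 n^2))$, using $n_k\geq n/4$.

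The main obstacle is the last step: verifying that only $k=O(\log\log n)$ phases are needed to drive the initial blow-up factor $u_1\approx\sqrt{n}$ all the way to a constant. This is exactly what the square-root character of the recursion buys us (each phase roughly halves $\log u_i$), and it is the reason one sets $k=\lceil\log\log n\rceil$ rather than $\Theta(\log n)$ as in the phased algorithms of \cref{sec:localize}.
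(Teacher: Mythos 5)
Your proposal is correct and follows essentially the same route as the paper's proof: the same square-root recursion $\Delta_i \lesssim L\sqrt{\Delta_{i-1}/\lambda}\,r_i$ obtained from strong convexity, the same normalization by the per-phase target rate (your $u_i = \Delta_i/\Delta_i^\star$ plays the role of the paper's $\Delta_i/(16E_i)$), and the same double-exponential contraction over $k=\lceil\log\log n\rceil$ phases starting from $u_1 = O(\sqrt{n})$ (the paper starts instead from $\Delta_1 \leq 2L^2/\lambda$, which is equivalent up to constants). Your Step 1 merely makes explicit the disjoint-blocks-plus-post-processing privacy argument that the paper leaves implicit, and you correctly flag the same expected-versus-deterministic distance technicality that the paper also handles by appeal to the second-moment form of the SGD guarantees.
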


This is the optimal rate under strong convexity assumptions.
Further, under $\beta$-smoothness assumptions and when the condition number is $\beta/\lambda = O(\max\{\sqrt{n/\log{n}}, \sqrt{d}/\rho\})$, the inner stochastic convex optimization problems are sufficiently smooth so that we can use \cref{alg:sgd-phased} as the basic private optimization algorithm (with step sizes $\eta_i$ that satisfy $\beta \leq 1/\eta_i$, as $n_i = \Omega(n/\log{n})$ for all $i$ in the reduction) and get a linear time algorithm for stochastic strongly convex optimization.
Without any smoothness assumptions, we can invoke the reduction with \cref{alg:erm-phased} and get a quadratic-time algorithm with the optimal rate.
In \cref{app:strongly-convex} we show how the constraint on the condition number can be relaxed all the way up to $\beta/\lambda = O(n/\log n)$ via a more careful argument that utilizes our iterative localization framework directly.

\begin{proof}[Proof of \cref{thm:strongly}]
First, observe that the total number of samples used by the algorithm is indeed
$\sum_{i=1}^k n_i \leq 2^{k-1} n/\log{n} \leq n.$
Denote the output of phase $i$ by $w_i$; let $\Delta_i = \E[F(w_i)] - F(w^\ast)$ be the expected suboptimality after phase $i$, and let $D_i^2 = \E[\norm{w_i-w^\ast}^2]$ for all $i$.
The $\lambda$-strong convexity of $F$ implies $\tfrac{1}{2}\lambda D_i^2 \leq \Delta_i$ for all $i \geq 0$.
Thus, by the guarantee of the private convex optimization algorithm, we have for all $i$ that
\begin{align*}
	\Delta_{i+1}
	\leq
	c L D_i \lr{\frac{1}{\sqrt{n_i}} + \frac{\sqrt{d}}{\rho n_i}}
	\leq
     c L \sqrt{\frac{2\Delta_i}{\lambda}} \lr{\frac{1}{\sqrt{n_i}} + \frac{\sqrt{d}}{\rho n_i}}
	.
\end{align*}
Let us denote by $E_i$ the expression $c^2 (2L^2/\lambda) \Lr{\frac{1}{\sqrt{n_i}} + \frac{\sqrt{d}}{\rho n_i}}^2$.
Since $E_i/E_{i+1} \leq 4$ (as $n_{i+1}/n_i = 2$ by construction), the above inequality can be rearranged as
\begin{align*}
    \forall ~ i \geq 0,
    \qquad
    \frac{\Delta_{i+1}}{16 E_{i+1}}
    \leq \frac{\sqrt{\Delta_i E_i}}{16 E_{i+1}}
    = \frac{E_i}{16 E_{i+1}} \sqrt\frac{\Delta_i}{E_i}
    \leq \sqrt{\frac{\Delta_i}{16 E_i}}
    .
\end{align*}
This implies that for $k > \log \log (\Delta_1/E_1)$, it holds that $\Delta_k \leq 2 E_k$.
Observing that $\Delta_1 \leq 2L^2 \lambda$ (due to strong convexity) and $E_1 \geq 2L^2 \lambda / n$, we see that after $k = \ceil{\log\log{n}}$ phases, we hold a solution with error
\begin{align*}
    \E[F(w_k)] - F(w^\ast)
    \leq
    \frac{8 c^2 L^2}{\lambda} \lr{\frac 1{n} + \frac{d}{\rho^2 n^2}}
    .&\qedhere
\end{align*}
\end{proof}

\subsection{Direct Algorithms for the Strongly Convex Case}
\label{sec:strongly-convex}
\label{app:strongly-convex}
Now we show a linear time algorithm for the $\lambda$-strongly convex case, as long as the condition number $\kappa = \beta/\lambda$ is bounded by $O(n/\log n)$. Towards this goal, we first analyze a \emph{fixed} step-size algorithm for stochastic strongly convex optimization.%
\footnote{In typical variants of strongly convex (stochastic) gradient descent, one employs a decaying step-size schedule of the form $\eta_t = 1/(\lambda t)$ for obtaining the optimal convergence rate. Here we show that the same rate (up to a logarithmic factor) can be attained by a \emph{fixed} step-size algorithm, which is useful for our privacy analysis.}

\subsubsection{Fixed Step-size Algorithm for Strongly Convex SCO}

\begin{lem}
\label{lem:sgd-phase-sc}
Consider PSGD iterations with a \emph{fixed} step size $\eta$.
Suppose that $\eta \leq \frac{1}{2\lambda}$ and define weights $\gamma_t = (1-\eta\lambda)^{-t}$ for $t=1,\ldots,T$. Then for any $w$,
\begin{align*}
    \E\left[\frac{1}{\sum_{t=1}^T \gamma_t} \sum_{t=1}^T \gamma_tF(w_t)\right] - F(w) \leq \frac{\lambda}{e^{\eta\lambda T} - 1}  \norm{w_1-w}^2 + \frac{L^2}{2} \eta.
\end{align*}
In particular, setting $\eta = 2\log(T)/\lambda T$ ensures that the average
iterate
$
	\overline{w}_T = (\sum_{t=1}^T \gamma_t)^{-1} \sum_{t=1}^T \gamma_t w_t
$
has, for $T > 1$,
\begin{align*}
	\E[F(\overline{w}_T)] - F(w^\ast)
	\leq
	\frac{5L^2 \log{T}}{\lambda T}
	.
\end{align*}
\end{lem}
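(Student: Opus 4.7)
The plan is to run the standard analysis of SGD for strongly convex objectives but keep the step size fixed, and then use the weights $\gamma_t = (1-\eta\lambda)^{-t}$ to cancel the geometric shrinkage factor that appears in the one-step recursion. Since $F$ need not be smooth, I would model a single PSGD step as $w_{t+1} = \Pi_\K(w_t - \eta g_t)$, where $g_t$ is an unbiased (sub)gradient of $F$ at $w_t$ satisfying $\E[\|g_t\|_2^2] \leq L^2$; nonexpansiveness of $\Pi_\K$ (\cref{prop:proj}) applied to any comparator $w \in \K$ gives $\|w_{t+1}-w\|^2 \leq \|w_t - w\|^2 - 2\eta\langle g_t, w_t-w\rangle + \eta^2\|g_t\|^2$. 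Taking expectations and applying $\lambda$-strong convexity in the form $\langle \nabla F(w_t), w_t-w\rangle \geq F(w_t) - F(w) + \tfrac{\lambda}{2}\|w_t - w\|^2$ converts this into the recursion
$$\E\|w_{t+1}-w\|^2 \;\leq\; (1-\eta\lambda)\,\E\|w_t-w\|^2 \;-\; 2\eta\bigl(\E F(w_t) - F(w)\bigr) \;+\; \eta^2 L^2.$$

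Next I would multiply through by $\gamma_t = (1-\eta\lambda)^{-t}$ and use the identity $\gamma_t(1-\eta\lambda)=\gamma_{t-1}$ (setting $\gamma_0 = 1$) to generate a telescoping structure. Summing $t=1,\ldots,T$ and discarding the nonnegative term $\gamma_T\E\|w_{T+1}-w\|^2$ on the left yields
$$2\eta\sum_{t=1}^T \gamma_t\bigl(\E F(w_t)-F(w)\bigr) \;\leq\; \|w_1-w\|^2 + \eta^2 L^2 \sum_{t=1}^T \gamma_t.$$
Evaluating the geometric sum gives $\sum_{t=1}^T \gamma_t = ((1-\eta\lambda)^{-T}-1)/(\eta\lambda)$, and since $(1-x)^{-1}\geq e^x$ for $x \in [0,1)$ (guaranteed by $\eta \leq 1/(2\lambda)$), we have $\sum_t \gamma_t \geq (e^{\eta\lambda T}-1)/(\eta\lambda)$. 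Dividing both sides by $\sum_t\gamma_t$, and then invoking Jensen/convexity to push the weighted average inside $F$, produces the first displayed inequality of the lemma (up to a factor of $2$ in the leading term that is either tightened by a more careful estimate or absorbed into the stated constant).

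For the second part, with $\eta = 2\log T/(\lambda T)$, I would first verify that the assumption $\eta \leq 1/(2\lambda)$ reduces to $T \geq 4\log T$, which holds for $T$ larger than a small absolute constant (and the small-$T$ regime is trivially covered by the $5L^2\log T/(\lambda T)$ bound once one recalls $F(w_1) - F(w^\ast) \leq 2L^2/\lambda$). This choice gives $e^{\eta\lambda T}=T^2$, so the first term in the bound is at most $O(\lambda\|w_1-w^\ast\|^2/T^2)$. To control $\|w_1-w^\ast\|^2$ without assuming a bounded domain, combine $L$-Lipschitzness with $\lambda$-strong convexity, $L\|w_1-w^\ast\| \geq F(w_1)-F(w^\ast) \geq \tfrac{\lambda}{2}\|w_1-w^\ast\|^2$, to obtain $\|w_1-w^\ast\|^2 \leq 4L^2/\lambda^2$. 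The first term is then $O(L^2/(\lambda T^2))$, which is dominated by the second term $\eta L^2/2 = L^2\log T/(\lambda T)$; summing and absorbing constants yields the claimed $5L^2\log T/(\lambda T)$.

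The main obstacle is really just the bookkeeping: keeping the telescoping clean under the weighted sum, and being careful with the $(1-\eta\lambda)^{-T}$ versus $e^{\eta\lambda T}$ inequality so that the geometric-series constants work out as stated. Conceptually the argument is the standard strongly-convex SGD calculation, adapted to a \emph{fixed} step size by using the growing weights $\gamma_t$ to upweight the later iterates by exactly the factor needed to offset contraction; the comparator $w$ stays arbitrary throughout, which is what makes the bound applicable as a drop-in replacement for \cref{lem:sgd-phase} in subsequent localization arguments.
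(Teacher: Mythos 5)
Your proposal is correct and follows essentially the same route as the paper's proof: the standard one-step SGD inequality combined with strong convexity, weighted by $\gamma_t=(1-\eta\lambda)^{-t}$ so that the identity $\gamma_t(1-\eta\lambda)=\gamma_{t-1}$ makes the distance terms telescope, the geometric-sum lower bound $\sum_t\gamma_t\geq(e^{\eta\lambda T}-1)/(\eta\lambda)$, Jensen, and finally $\norm{w_1-w^\ast}^2\leq 4L^2/\lambda^2$ from Lipschitzness plus strong convexity. Your telescoped recursion actually yields the slightly sharper constant $\lambda/2$ in the leading term, and your explicit check of the $\eta\leq 1/(2\lambda)$ condition for small $T$ is a detail the paper glosses over.
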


Observe that for ensuring $\eta \leq 1/\beta$, it is sufficient that $T =
\Omega(\kappa \log{\kappa})$ for $\kappa = \beta/\lambda$.

\begin{proof}
Denote the gradient vector used on iteration $t$ by $g_t$.
Following the standard SGD analysis, we can obtain
\begin{align*}
	g_t \cdot (w_t-w^\ast)
	\leq
	\frac{1}{2\eta} \lr{ \norm{w_t-w^\ast}^2 - \norm{w_{t+1}-w^\ast}^2 } + \frac{\eta}{2} \norm{g_t}^2
\end{align*}
for all $t$, and taking expectations of the above yields
\begin{align*}
	\E[\nabla F(w_t) \cdot (w_t-w^\ast)]
	\leq
	\frac{1}{2\eta} \lr{ \E\norm{w_t-w^\ast}^2 - \E\norm{w_{t+1}-w^\ast}^2 } + \frac{\eta}{2} \E\norm{g_t}^2
	.
\end{align*}
On the other hand, the $\lambda$-strong convexity of $F$ implies
\begin{align*}
	F(w_t) - F(w^\ast)
	\leq
	\nabla F(w_t) \cdot (w_t-w^\ast) - \frac{\lambda}{2} \norm{w_t-w^\ast}^2
	.
\end{align*}
Combining inequalities and summing over $t=1,\ldots,T$ with coefficients $\gamma_1,\ldots,\gamma_T$, we obtain
\begin{align*}
	&\sum_{t=1}^T \gamma_t \E[F(w_t) - F(w^\ast)]
	\\
	&\qquad\leq
	\sum_{t=1}^T \frac{\gamma_t}{2\eta} \lr{ \E\norm{w_t-w^\ast}^2 - \E\norm{w_{t+1}-w^\ast}^2 }
	- \frac{\lambda\gamma_t}{2} \sum_{t=1}^T \E\norm{w_t-w^\ast}^2
	+ \sum_{t=1}^T \frac{\eta\gamma_t}{2} \E\norm{g_t}^2
	\\
	&\qquad\leq
	\frac{\gamma_1}{2\eta} \norm{w_1-w^\ast}^2
	+\frac{1}{2} \sum_{t=2}^T \lr{\frac{\gamma_t-\gamma_{t-1}}{\eta}-\lambda\gamma_t} \norm{w_t-w^\ast}^2
	+ \frac{L^2\eta}{2} \sum_{t=1}^T \gamma_t
	,
\end{align*}
where in the final inequality we have used our assumption that $\E\norm{g_t}^2
\leq L^2$ for all~$t$.
Now, set $\gamma_t = (1-\eta\lambda)^{-t}$ in the bound above. Observe that
$\gamma_1/\eta \leq 2/\eta$ (as we required that $\eta \leq
1/2\lambda$), and for all $t>1$,
\begin{align*}
	\frac{\gamma_t-\gamma_{t-1}}{\eta}-\lambda\gamma_t
	=
	\frac{\gamma_t(1-\eta\lambda)-\gamma_{t-1}}{\eta}
	=
	0
	.
\end{align*}
Also, a simple computation shows that
 \begin{align*}
 	\sum_{t=1}^T \gamma_t
 	=
 	\frac{1}{\eta\lambda} \Lr{(1-\eta\lambda)^{-T}-1} 	
 	\geq
 	\frac{1}{\eta\lambda} \Lr{e^{\eta\lambda T}-1}.
 \end{align*}
We therefore obtain
\begin{align*}
	\sum_{t=1}^T \frac{\gamma_t}{\sum_{s=1}^T \gamma_s} \E[F(w_t) - F(w^\ast)]
	\leq
	\frac{\lambda}{e^{\eta\lambda T}-1} \norm{w_1-w^\ast}^2 + \frac{L^2}{2} \eta
	.
\end{align*}
By plugging in our choice of $\eta$ and applying Jensen's inequality on the
left-hand side, we establish the first bound. The second bound is obtained by
plugging in $\eta = \log(T)/\lambda T$ and bounding $\norm{w_1-w^\ast}^2 \leq
4L^2/\lambda^2$ (using strong convexity).
\end{proof}

\subsubsection{Direct Algorithm via Iterative Localization}

We can now analyze a variant of \cref{alg:sgd-phased} for the strongly convex
case, with appropriately chosen parameters.

\begin{thm} \label{thm:sgd-phased-sc}
Assume that in \cref{alg:sgd-phased}, we set $k = \ln \ln n$, $n_i = n/k$, $\eta_i = 2^{-2^i} \eta$  and $\eta = \frac{4ck\ln n}{\lambda n}$.
Then for the output of \cref{alg:sgd-phased}, we have
\begin{align*}
	\E[F(w_k)] - F(w^\ast)
	\leq
    O\left(\frac{L^2\ln n\ln\ln n}{\lambda}\lr{\frac{1}{n} + \frac{d\ln\ln n}{\rho^2n^2}}\right)
	,
\end{align*}
provided that $\kappa \leq \frac{n}{k\ln n}$.
\end{thm}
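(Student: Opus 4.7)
The plan is to telescope the excess population loss across the $k$ phases exactly as in the proof of \cref{thm:sgd-phased}, but to invoke the strongly convex SGD bound of \cref{lem:sgd-phase-sc} per phase instead of \cref{lem:sgd-phase}. Adopt the convention $\overline{w}_0 := w^\ast$ and $\xi_0 := w_0 - w^\ast$, and note that $L$-Lipschitzness combined with $\lambda$-strong convexity gives $\E\|\xi_0\|_2^2 \leq 4L^2/\lambda^2$. Then
\[
\E[F(w_k)] - F(w^\ast) = \sum_{i=1}^{k} \E[F(\overline{w}_i) - F(\overline{w}_{i-1})] + \E[F(w_k) - F(\overline{w}_k)].
\]
Conditioning on the randomness through phase $i{-}1$ and applying \cref{lem:sgd-phase-sc} in phase $i$ with reference point $\overline{w}_{i-1}$ yields, with $t_i := \eta_i\lambda n_i$,
\[
\E[F(\overline{w}_i) - F(\overline{w}_{i-1})] \leq \frac{\lambda}{e^{t_i}-1}\,\E\|\xi_{i-1}\|_2^2 + \frac{L^2\eta_i}{2}.
\]
For $i\geq 2$, $\xi_{i-1}$ is the Gaussian noise added at the end of phase $i{-}1$, so $\E\|\xi_{i-1}\|_2^2 = d\sigma_{i-1}^2 = 16 d L^2 \eta_{i-1}^2/\rho^2$, while the final post-noise term is bounded by Lipschitzness and Jensen: $\E[F(w_k) - F(\overline{w}_k)] \leq L\sqrt{d\sigma_k^2} = 4L^2\eta_k\sqrt{d}/\rho$.

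Substituting the parameters $\eta_i = 2^{-2^i}\eta$, $n_i = n/k$, $\eta = 4ck\ln n/(\lambda n)$, and $k = \ln\ln n$ gives $t_i = 4c\cdot 2^{-2^i}\ln n$, and the resulting bound splits into four pieces: (i) the initial-bias term $\lambda(4L^2/\lambda^2)/(e^{t_1}-1) = O(L^2/(\lambda n^c))$, negligible for any constant $c \geq 1$; (ii) the per-phase SGD drift $\sum_{i\geq 1}L^2\eta_i/2 \leq L^2\eta \sum_{i\geq 1} 2^{-2^i} = O(L^2\ln n\,\ln\ln n/(\lambda n))$; (iii) the Lipschitz-noise term $4L^2\eta_k\sqrt{d}/\rho$, absorbed into the target via AM-GM $\sqrt{d}/(n^2\rho) \leq \tfrac12(1/n^2 + d/(n^2\rho^2))$; and (iv) the noise-propagation sum $\sum_{i=2}^k 16dL^2\lambda\eta_{i-1}^2/(\rho^2(e^{t_i}-1))$.

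Piece (iv) is the crux and the main obstacle, since the contraction factor $1/(e^{t_i}-1)$ blows up in later phases as $t_i \to 0$; the particular double-exponential schedule $\eta_i = 2^{-2^i}\eta$ is chosen precisely so that the decay of $\eta_{i-1}^2 = 2^{-2^i}\eta^2$ cancels this blow-up. Indeed, using the crude bound $e^{t_i}-1 \geq t_i$, each summand becomes $O(dL^2\lambda\eta^2/(\rho^2\ln n))$ — independent of $i$ — so summing over $k = \ln\ln n$ phases and substituting $\eta$ produces $O(dL^2\ln n\,(\ln\ln n)^2/(\lambda \rho^2 n^2))$. Finally, the constraint $\kappa \leq n/(k\ln n)$ emerges from the smoothness requirement $\eta_1 \leq 2/\beta$, i.e., $\beta \leq \lambda n/(2ck\ln n)$. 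Summing contributions (i)--(iv) yields the claimed bound.
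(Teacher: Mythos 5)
Your proof follows the paper's argument essentially verbatim: the same telescoping decomposition with $\overline{w}_0 = w^\ast$, the same per-phase application of \cref{lem:sgd-phase-sc}, the same four-way split of the error, and the same use of $e^{t_i}-1 \geq t_i$ to let the double-exponential step-size decay cancel the blow-up of the contraction factor. The only quibble is that your piece (iv) --- $k$ summands each of size $\Theta(dL^2\lambda\eta^2/(\rho^2\ln n))$ --- actually yields a $(\ln\ln n)^3$ factor rather than the $(\ln\ln n)^2$ you state, but the paper's own proof makes the identical bookkeeping slip (bounding $\sum_{i=2}^k \eta_{i-1}^2/\eta_i$ by $\eta$ rather than $(k-1)\eta$), so your argument is faithful to the source.
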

\begin{proof}
Denote $\overline{w}_0 = w^\ast$ and $\xi_0 = w_0-w^\ast$; by strong convexity, $\norm{\xi_0}_2 \leq 2L^2/\lambda^2$.
Using \cref{lem:sgd-phase-sc}, the total error of the algorithm can be bounded by
\begin{align*}
	\E[F(w_k)] - F(w^\ast)
	&=
	\sum_{i=1}^k \E[F(\overline{w}_i) - F(\overline{w}_{i-1})]
	+ \E[F(w_k) - F(\overline{w}_k)]
	\\
	&\leq
	\sum_{i=1}^k \lr{\frac{2\lambda}{e^{\eta_i\lambda n_i} - 1} \E[\norm{\xi_{i-1}}^2]
	+ \frac{\eta_i L^2}{2} } + L \cdot \E[\norm{\xi_k}_2]
	\\
	&\leq
	\frac{8\lambda}{e^{\eta_1\lambda n_1}}\frac{L^2}{\lambda^2} + \sum_{i=2}^k \lr{\frac{2\lambda}{\eta_i\lambda n_i}}\cdot \frac{dL^2\eta_{i-1}^2}{\rho^2}
	+ \frac{L^2}{2} \sum_{i=1}^k \eta_i
	+ L \cdot \E[\norm{\xi_k}_2].
\end{align*}
Here we have used the fact that $\exp(a)-1 \geq \exp(a)/2$ for $a\geq 1$ and $\exp(a)-1 \geq a$.
Continuing from above,
\begin{align*}
&\leq
	\frac{8L^2}{\lambda n^c} + \frac{2dkL^2}{\rho^2n}\sum_{i=2}^k \frac{\eta_{i-1}^2}{\eta_i} + \frac{L^2}{2} \sum_{i=1}^k  \eta_i  + L \cdot (\sqrt{d}\eta_k L)
	\\
	&\leq
	\frac{8L^2}{\lambda n^c} + \frac{2kdL^2\eta}{\rho^2 n} + \frac{L^2\eta}{2}  + \frac{L^2\sqrt{d}\eta}{2^{2^k}}
	\\
	&\leq
	\frac{8L^2}{\lambda n^c} + \frac{8ck^2dL^2\ln n}{\rho^2n^2} + \frac{4ckL^2\ln n}{\lambda n}   + \frac{ck L^2\ln n\sqrt{d}}{\lambda n^2}.
\end{align*}
For $c \geq 2$, it is easy to check that each of the terms is bounded by $\max\Lrset{O(\frac{kL^2\ln n}{\lambda n}), O(\frac{dk^2L^2\ln n}{\lambda \rho^2n^2})}$. The claim follows.
\end{proof}

\subsubsection{Direct Algorithm via Privacy Amplification by Iteration}

We next derive a variant of Snowball-SGD for the strongly convex case.

\begin{thm}
\label{thm:sc-privacy-utility}
Let $\K \subseteq \Re^d$ be a convex set of diameter $D$ and
$\{f(\cdot,x)\}_{x\in \cX}$ be a family of $\lambda$-strongly convex
$L$-Lipschitz and $\beta$-smooth functions over $\K$. For $T \in \Nat$, $\rho >
0$, and all $t\in [T]$ let $B_t = \lceil 2\sqrt{d/(T-t+1)}/\rho \rceil$, $n =
\sum_{t\in [T]} B_t$, $\eta = \frac{2 \log T}{\lambda T}$, $\sigma =
L/\sqrt{d}$, If $\eta \leq 2/\beta$ then for all $\alpha \geq 1$, starting point
$w_0\in \K$, and $S\in \cX^n$, PNSGD$(S,w_0,\{B_t\},\{\eta\},\{\sigma\})$
satisfies $ \left(\alpha, \alpha \cdot \rho^2/2 \right)$-RDP. Further, if $S$
consists of samples drawn i.i.d.~from a distribution $\cP$, then $n \leq T +
4\sqrt{d T}/\rho$ and
\begin{align*}
	\E[F(w_T)] 
	\leq F^* + \frac{20 L^2 \log^2 T}{\lambda T} 
	\leq F^* + \frac{40 L^2\log^2 n}{\lambda} 
		\cdot \left(\frac{1}{n} + \frac{16d}{\rho^2 n^2} \right)
	,
\end{align*}
where, for all $w\in \K$, $F(w) \doteq \E_{x \sim \cP}[f(w,x)]$, $F^* \doteq
\min_{w\in \K} F(w)$ and the expectation is taken over the random choice of $S$
and noise added by PNSGD.
\end{thm}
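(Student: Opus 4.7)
The proof will follow the template of Theorem~\ref{thm:sz-privacy-utility}, with Lemma~\ref{lem:sgd-phase-sc} playing the role that Theorem~\ref{thm:sco-last-sz} played in the (non-strongly-) convex case. The privacy argument is essentially mechanical: the batch sizes $B_t$, the fixed step size $\eta$, and the fixed noise level $\sigma=L/\sqrt{d}$ enter the bound of Theorem~\ref{thm:general-privacy} through the ratio
\[
4L^2 \max_{t\in[T]} \frac{\eta^2}{B_t^2 \sum_{s=t}^{T} \eta^2 \sigma^2}
 = \max_{t\in[T]} \frac{4d}{B_t^2 (T-t+1)} \leq \rho^2,
\]
where the final inequality uses $B_t \geq 2\sqrt{d/(T-t+1)}/\rho$; crucially, $\eta$ cancels out, so the privacy calculation is insensitive to its numerical value (beyond the smoothness precondition $\eta\leq 2/\beta$). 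The sample-count bound $n\leq T + 4\sqrt{dT}/\rho$ then follows from the same $\sum_t 1/\sqrt{t}\leq 2\sqrt{T}$ estimate as in Theorem~\ref{thm:sz-privacy-utility}.

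\textbf{Utility reduction.} I would again view PNSGD as PSGD on the population objective $F$ driven by the stochastic (sub-)gradient oracle $\nabla F_t(w)+\xi_t$. This oracle is unbiased, and since each $\nabla_w f(\cdot, x)$ is bounded by $L$ and $\xi_t\sim\cN(0,\sigma^2\Id_d)$ with $\sigma=L/\sqrt{d}$, its second moment is at most $L^2/B_t + L^2\leq 2L^2$. Plugging $L_G^2=2L^2$ and $\eta = 2\log T/(\lambda T)$ into Lemma~\ref{lem:sgd-phase-sc} (and checking the precondition $\eta\leq 1/(2\lambda)$, which reduces to $4\log T\leq T$) yields an $O(L^2 \log T/(\lambda T))$ bound on the expected excess loss of the $\gamma_t$-weighted average of the iterates.

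\textbf{Main obstacle and conclusion.} The subtlety I expect to be the main obstacle is that privacy amplification by iteration gives guarantees only for the \emph{last} iterate $w_T$, whereas Lemma~\ref{lem:sgd-phase-sc} bounds the $\gamma_t$-weighted average $\overline{w}_T$. To close this gap I would invoke a suffix-averaging conversion in the spirit of \cite{shamir2013stochastic,HarveyLPR19}, which turns a weighted-average bound for a strongly convex fixed-step SGD into a last-iterate bound at the cost of one additional $\log T$ factor, yielding the target $O(L^2 \log^2 T/(\lambda T))$. Substituting the inequality $T\geq n^2/(16d/\rho^2+4n)$ derived in the proof of Theorem~\ref{thm:sz-privacy-utility} and simplifying then gives the claimed $O(L^2\log^2 n/\lambda)\cdot(1/n + d/(\rho^2 n^2))$ bound on $\E[F(w_T)]-F^*$, completing the argument.
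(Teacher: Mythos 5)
Your proposal is correct and follows essentially the same route as the paper: identical privacy accounting (with $\eta$ cancelling), the $2L^2$ second-moment reduction to Lemma~\ref{lem:sgd-phase-sc}, and a Shamir--Zhang-style suffix argument to pass from the $\gamma_t$-weighted average to the last iterate at the cost of one extra $\log T$ factor. The only part you leave as a citation is exactly what the paper has to prove itself (the conversion for \emph{fixed} step size with weights $\gamma_t=(1-\eta\lambda)^{-t}$ is not literally in \cite{shamir2013stochastic,HarveyLPR19}); the paper does it via the short recursion $S_{k-1}\leq S_k + \frac{\gamma_{T-k}}{\Gamma_{k-1}}\cdot\frac{L^2}{2}\eta$ on suffix weighted averages, whose unravelling contributes the harmonic sum $\sum_k 1/k\leq \log T+1$.
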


\begin{proof}
The privacy proof is identical to that of~\cref{thm:sz-privacy-utility}.  For
the utility analysis, we prove the following bound for the last iterate of our
fixed step-size algorithm:
\begin{align} \label{eq:sgd-bound-sc1}
	\E[F(w_T)] - \E\left[\frac{1}{\sum_{t=1}^T \gamma_t} \sum_{t=1}^T \gamma_tF(w_t)\right]
	\leq
	\frac{L^2}{2} \eta \sum_{k=1}^T \frac{\gamma_{T-k}}{\Gamma_{k-1}}
	.
\end{align}
To see this, define $S_k = \E\big[\brk{\sum_{t=T-k}^T \gamma_t}^{-1}
\sum_{t=T-k}^T \gamma_tF(w_t)\big]$ and recall from~\cref{lem:sgd-phase-sc} that
for any $k$, setting $w = w_{T-k}$ and viewing the algorithm as running SGD for
$T-k$ steps starting from $w_{T-k}$, we get:
\begin{align} \label{eq:sgd-bound-sc2}
	S_k
	&\leq 
	F(w_{T-k}) + \frac{L^2}{2} \eta
	.
\end{align}
We next relate $S_{k-1}$ and $S_k$. Let $\Gamma_k = \sum_{t=T-k}^T \gamma_t$.
Note that
\begin{align*}
    \Gamma_{k-1}S_{k-1} &= \Gamma_k S_k - \gamma_{T-k} \E[F(w_{T-k})]
	= 
	\Gamma_{k-1} S_k + \gamma_{T-k}(S_k - \E[F(w_{T-k})])
	.
\end{align*}
Dividing by $\Gamma_{k-1}$ and using~\cref{eq:sgd-bound-sc2}, we conclude
\begin{align*}
	S_{k-1} 
	&\leq 
	S_k + \frac{\gamma_{T-k}}{\Gamma_{k-1}} \cdot \frac{L^2}{2} \eta
	.
\end{align*}
Unravelling the recursion and observing that $\E[F(w_T)] = S_0$ yields
\cref{eq:sgd-bound-sc1}.
Using the fact that the~$\gamma_t$ are non-decreasing and applying
\cref{lem:sgd-phase-sc}, it follows that
\begin{align*}
	\E[F(w_T)]
	&\leq F^* + \frac{5L^2 \log T}{\lambda T} + \frac{L^2}{2} \eta \sum_{k=1}^T \frac{1}{k}
	\\
	&\leq F^* + \frac{10L^2 \log^2 T}{\lambda T}
	.
\end{align*}
The claimed utility bound follows from the fact that, as in the proof
of~\cref{thm:sz-privacy-utility}, the expected second moment of the gradient
goes up from $L^2$ to $2L^2$. The final bound follows by noting that
\begin{align*}
    n &\leq T + 4\sqrt{d T}/\rho \leq 2\max\{ T, 4\sqrt{d T}/\rho \}
    \\
    \implies T &\geq \frac 1 2 \min\left\{n, \frac{\rho^2n^2}{16d}\right\}
    \\
    \implies \frac 1 T &\leq 2 \max\left\{ \frac 1 n, \frac{16d}{\rho^2n^2} \right\}.
\end{align*}
The claim follows.
\end{proof}

\section{No Privacy Amplification by Averaged Iteration}
\label{sec:no-average-privacy}
\newcommand{\vx}{\mathbf{x}}
A common technique in convex optimization is to use iterate averaging. A plausible conjecture is that the average of the iterates enjoys privacy properties similar to the last iterate.
Indeed, in a Contractive Noisy Iteration with uniform noise, the privacy for the last iterate and that for the average iterate are within constant factors of each other when the contractive map is the identity.

Here we show that this does not hold true in general. Consider the contractive noise process defined by contractive maps:
\begin{align*}
\phi_i(\vx) = \left\{\begin{array}{ll} \vx &\mbox{if } i \leq k;\\ {\bf 0} & \mbox{otherwise.}\\ \end{array}\right.
\end{align*}
Here $k$ is a parameter we will set appropriately. Thus the contractive noise process is
\begin{align*}
X_{t+1} = \left\{\begin{array}{ll} X_t + \mathcal{N}({\bf 0}, \sigma^2) &\mbox{if } t \leq k;\\ \mathcal{N}({\bf 0}, \sigma^2) & \mbox{otherwise.}\\ \end{array}\right.
\end{align*}
The sum of $X_t$'s thus is easily seen to be distributed as:
\begin{align*}
  k X_0 + \sum_{i \leq k} (k-i+1)\mathcal{N}({\bf 0}, \sigma^2) + \sum_{i=k+1}^T \mathcal{N}({\bf 0}, \sigma^2).
\end{align*}
Simplifying, the average iterate is distributed as:
\begin{align*}
  \frac{k}{T} X_0 + \mathcal{N}\brk*{ {\bf 0}, \frac{O(k^3) + (T-k)}{T^2}\sigma^2 }.
\end{align*}
For $\sigma = \frac{1}{\sqrt{T}}$, where the final iterate has $(\alpha, O(\alpha))$-RDP, this simplifies to
\begin{align*}
  \frac{k}{T} X_0 + \mathcal{N}\brk*{ {\bf 0}, \frac{O(k^3) + (T-k)}{T^2} \sigma^2 }
  &= \frac{k}{T} \left( X_0 + \mathcal{N}\brk*{ {\bf 0}, O\brk*{\frac{k}{T}} + \frac{(T-k)}{Tk^2} }\right).
\end{align*}
Whereas for $k=1$ and for $k=T$, this amount of noise gives $(\alpha, O(\alpha))$-RDP, for intermediate values of $k$, e.g., $k \in [T^{\frac{1}{3}}, T^{\frac{2}{3}}]$, the effective amount of noise is not sufficient to mask $X_0$.

A similar lower bound can be realized for online convex optimization. Consider the sequence of loss functions over $\Re$ defined as:
\begin{align*}
    \ell_t(w) = \left\{
    \begin{array}{ll}
    (w - b)^2 &t=1;\\
    0 &2 \leq t \leq k;\\
    w^2 & k+1 \leq t \leq T.\\
    \end{array}
    \right.
\end{align*}
Here $k$ is a parameter to be set appropriately, and $b\in \{-1, 1\}$. Suppose that step size is $\eta = \frac{1}{\sqrt{T}}$ and the noise scale at each step is $\frac{1}{\sqrt{T}}$. If the noise added to the gradient at step $t$ is $\xi_t$, then one can verify that
the average iterate is
\begin{align*}
    \frac{k}{T} (b\eta) + \sum_{t \leq k} (O(\eta) + (k-t)) \eta\xi_t + \sum_{t=k+1}^T O(\eta) \eta\xi_t.
\end{align*}
In other words, the average iterate is distributed as
\begin{align*}
    \frac{k\eta}{T} \left(b + \mathcal{N}\brk*{0, O\brk*{\frac{k}{T}} + \frac{(T-k)}{Tk^2} }\right).
\end{align*}
This is the same behaviour as in the counterexample above. Thus the average is not $(\alpha, O(\alpha))-RDP$. This example can be easily modified to handle suffix averaging over a $\Omega(T)$-sized suffix.  \bibliographystyle{alpha}
\bibliography{dpsco-refs,bib}

\end{document}